\definecolor{Gray}{gray}{0.85}
\newcommand{\Gray}[0]{\rowcolor{gray!20}}
\definecolor{sclgreyblue}{rgb}{0.2,0.3,0.5}%
\newcommand{\lightgrey}[1]{\texttt{\color{black!50} #1}}
\newcommand{\abbr}[0]{SliME\xspace}
\def \L {\mathcal{L}}
\def \R {\mathbb{R}}
\newtheorem{thm}{Theorem}
\newtheorem{lemma}{Lemma}
\title{Beyond LLaVA-HD: Diving into High-Resolution Large Multimodal Models}
\author{Yi-Fan Zhang$^{1,2}$, ~ Qingsong Wen$^{3}$, ~Chaoyou Fu, ~Xue Wang$^4$,
 \\ \textbf{Zhang Zhang$^{1,2}$,~ Liang Wang$^{1,2}$,~ Rong Jin$^{5}$} \\
$^{1}$State Key Laboratory of Multimodal Artificial Intelligence Systems (MAIS), Institute of Automation\\
$^{2}$School of Artificial Intelligence, University of Chinese Academy of Sciences (UCAS) \\
$^{3}$Squirrel AI Learning; $^{4}$Alibaba Group; $^{5}$Meta AI \\
\url{https://github.com/yfzhang114/SliME}
} %
\begin{document}

\maketitle
\begin{tikzpicture}[remember picture,overlay,shift={(current page.north west)}]
\node[anchor=north west,xshift=1.8cm,yshift=-2.85cm]{\scalebox{-1}[1]{\includegraphics[width=2.cm]{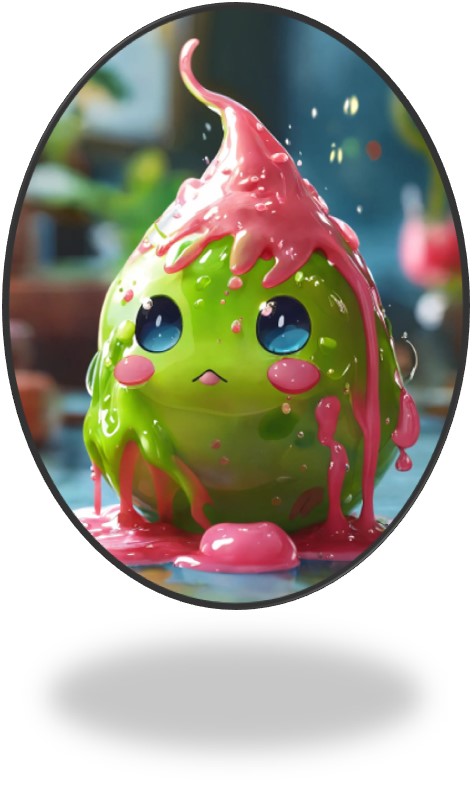}}};
\vspace{-0.6cm}
\end{tikzpicture}
\vspace{-0.6cm}
\begin{abstract}

Seeing clearly with high resolution is a foundation of Large Multimodal Models (LMMs), which has been proven to be vital for visual perception and reasoning. Existing works usually employ a straightforward resolution upscaling method, where the image consists of global and local branches, with the latter being the sliced image patches but resized to the same resolution as the former. This means that higher resolution requires more local patches, resulting in exorbitant computational expenses, and meanwhile, the dominance of local image tokens may diminish the global context. In this paper, we dive into the problems and propose a new framework as well as an elaborate optimization strategy. Specifically, we extract contextual information from the global view using a mixture of adapters, based on the observation that different adapters excel at different tasks. With regard to local patches, learnable query embeddings are introduced to reduce image tokens, the most important tokens accounting for the user question will be further selected by a similarity-based selector. Our empirical results demonstrate a `less is more' pattern, where \textit{utilizing fewer but more informative local image tokens leads to improved performance}. Besides, a significant challenge lies in the training strategy, as simultaneous end-to-end training of the global mining block and local compression block does not yield optimal results. We thus advocate for an alternating training way, ensuring balanced learning between global and local aspects. Finally, we also introduce a challenging dataset with high requirements for image detail, enhancing the training of the local compression layer. The proposed method, termed LMM with \textbf{S}ophisticated Tasks, \textbf{L}ocal \textbf{i}mage compression, and \textbf{M}ixture of global \textbf{E}xperts (\abbr), achieves leading performance across various benchmarks with only 2 million training data.
\end{abstract}

\section{Introduction}\label{sec:intro}
In the past years, we are fortunate to witness a great flourish of LMMs~\citep{awadalla2023openflamingo, dai2024instructblip, liu2023visual}. 
However, they still struggle with complex visual perception~\citep{zhang2024debiasing, huang2023opera} and reasoning tasks~\citep{li2024mini, yin2023survey}. 
Empirical studies have shown that employing higher resolutions is a good solution~\citep{bai2023qwen,liu2023improved,li2023monkey,mckinzie2024mm1}. Approaches like LLaVA-Next~\citep{liu2024llavanext} segment high-resolution images into multiple patches, encoding each one independently before concatenating all local patch tokens with the original global image tokens, albeit at an escalated computational cost. 
The other models like Monkey~\citep{li2023monkey} and LLaVA-UHD~\citep{xu2024llava-uhd} also split images into patches, but subsequently compress them to avoid redundant tokens. 
In such cases, for high-resolution images, the local image tokens dominate the feature space. 
For example, in a $1024\times 1024$ image divided into $9$ patches, the global image token accounts for only $1/10$. 

In contrast, our core idea posits that global information should be prioritized, thus we aim to extract and retain as much global context as possible while enhancing it with local image details. 
In this study, we initially segment the images into patches according to their resolution. 
The image tokens are then categorized into two groups: the global view and local patches. 
For the former, we preserve the token count to retain all contextual information and utilize a mixture of adapters to further explore global context. 
As displayed in Fig.~\ref{fig:router} (b), we employ a Multilayer Perceptron (MLP) to project image features into the feature space of the LLM, and a set of learnable queries named qformer are employed to extract crucial global information. 
Softly mixing outputs from the two adapters aids the LLM in comprehending the global context more effectively. 
Considering the local patches, they provide additional image details but are compressed using a querying transformer to mitigate computational costs. 
As shown in Fig.~\ref{fig:router} (d), a text-guided router is further proposed to select the most relevant local image tokens corresponding to the input instruction or question, thereby avoiding excessive image tokens and focusing on pertinent image information.

At the same time, we find that it is challenging to train the global projection and local compression simultaneously.
The simplicity of the projection layer makes it easy to train, but also causes the model to degenerate rapidly due to over-reliance on global features and neglect of local counterparts. 
We formalize this as a bi-linear problem and theoretically show that simultaneously updating both blocks does not converge to optimal results. Instead, we propose to alternatively train the global projection block and local compression block, by which we ensure that both global and local features are effectively learned and utilized.

The training is known to be data-driven.
The current data instances primarily stem from real-world captions, general QA, and a limited number of real-world conversations sampled from robust LLMs. 
Most of these instances revolve around basic perception, recognition, and reasoning tasks, such as understanding relationships among objects.
However, there are two notable flaws: firstly, the tasks are not challenging enough and largely lack intricate visual reasoning tasks; secondly, many questions only pertain to specific objects or actions, neglecting the need for all image details. 
This limitation hampers the full utilization of the capabilities offered by our high-resolution framework. 
To this end, this paper meticulously gathers and filters datasets to create the Science and Mathematical Reasoning dataset (SMR), which encompasses nine challenging tasks spanning natural science, mathematical problems, and scientific chart comprehension. 
Some of these tasks provide complete reasoning paths, compelling the model to articulate the entire reasoning process. 
Importantly, many images in the SMR dataset contain rich annotations. 
Completing such intricate reasoning tasks necessitates a thorough understanding of image details, which will greatly benefit the training of our framework.


LMM with \textbf{S}ophisticated Tasks, \textbf{L}ocal \textbf{i}mage augmentation, and \textbf{M}ixture of global \textbf{E}xperts (\abbr) can be readily instantiated with a range of LLMs. Extensive empirical studies validate the effectiveness of our proposed method. Remarkably, our approach achieves leading performance across various settings, even matching the performance of well-established models such as Gemini Pro~\citep{team2023gemini} and Qwen-VL-Plus~\citep{bai2023qwen} in about 10 benchmarks with only 8B LLM and 2 million data. These results underscore the potential of \abbr to set new benchmarks, highlighting its advanced capabilities.

\section{Method}

We delineate our method aimed at enhancing LMMs' image understanding capabilities in this section. We utilize adaptive slicing to scale input resolution, and refine global context via a soft mixture of experts. Additionally, we compress local features using a query transformer architecture~\footnote{In this context, the abbreviation 'qformer' refers to query former, where we utilize learnable query embeddings as described in previous works~\citep{awadalla2023openflamingo,jaegle2021perceiver}, rather than employing the Qformer approach~\citep{dai2024instructblip}}, select features optimally with a text-guided router, and employ an alternating training scheme to optimize the bilinear optimization problem. These strategies collectively improve both the computational efficiency and performance of LMMs. Furthermore, we introduce the SMR dataset, known for its challenging nature and high demand for understanding image details, making it an ideal choice for training high-resolution frameworks.


\subsection{Refining Global Context with a Soft Mixture of Experts}
\textbf{Scaling Input Resolution by Adaptive Slicing.} Initially, we explore various grid options for slicing images, similar to LLaVA-Next, but with finer granularity (see Fig.~\ref{fig:slcing}). We investigate resolutions ranging from $336 \times (m,n)$ with $m=1,n=1$ to $m=6,n=6$ to determine the most efficient option. To provide a global context, we pad and resize the image to a uniform size of $336\times 336$ and concatenate it with local features. For images with shapes $W$ and $H$, we iterate through all available partition strategies. For instance, when using the strategy $m*n$, the resize scale can be calculated as $s=\min\{m*336/W, n*336/H\}$. The utilized resolution after scaling will be $\min\{W*H, W*s*H*s\}$, and the wasted resolution will be $m*336*n*336-\min\{W*H, W*s*H*s\}$. We select the best partition by maximizing the utilized resolution and minimizing the wasted resolution when the utilized resolution is the same.

\textbf{Why not Compress Global Image Tokens for Efficiency?} Our approach is inspired by empirical observations, consistent with previous findings~\citep{zeng2023matters}: when employing attention-based models as adapters to reduce tokens or bridge the modality gap, a more intricate hyper-parameter search may be required to achieve performance comparable to simpler MLP. As depicted in Fig.~\ref{fig:teaser_alt}, replacing the MLP adapter of LLaVA-v1.5 with the query former of the same number of tokens yields significantly inferior performance on most benchmarks. A simpler projector compels the LLM to better understand visual inputs, leading to enhanced generalization~\citep{lin2023vila}. Consequently, we refrain from reducing token numbers for global images and instead preserve all global information through simple projection.

\textbf{Global Context Refinement by Soft Mixture of Experts.} Although query former is inferior to MLP on most benchmarks, the learnable query embeddings and attention mechanism allow for a different feature selection strategy, and on some benchmarks such as ScienceQA (SQA)~\citep{lu2022learn}, query former achieves better performance. Building on the insights from our analysis, we propose a novel approach to refine global context features by leveraging the strengths of both MLP and query former adapters. Specifically, we employ a noisy Mixture of Experts (MOE) framework to combine the benefits of these two types of frameworks.
In this framework, for feature $x$ from the vision encoder, a learned gating network $G$ determines the weights for two adapters: $G(x)_0 f_m(x) + G(x)_1 f_q(x)$. The gating network learns to dynamically adjust the importance of each adapter based on the input feature. To prevent the gating network from predominantly activating the same adapter, we introduce learnable noise during training\footnote{As mentioned earlier, the training of MLP is easy to converge quickly, potentially causing the gated network to assign higher weight to MLP, hindering full training of query former.}. This is achieved through the following equation: $G(x)=\text{Softmax}\left(\{(x\cdot W_g)_i + \text{Normal}(0,1)\cdot\text{Softplus}\left((x\cdot W_{noise})_i\right)\}_{i=1}^2\right)$.

\subsection{Local Feature Mining with Compression and Selection}
\begin{wrapfigure}{r}{0.42\linewidth}
\vspace{-0.8cm}
  \begin{center}
    \includegraphics[width=\linewidth]{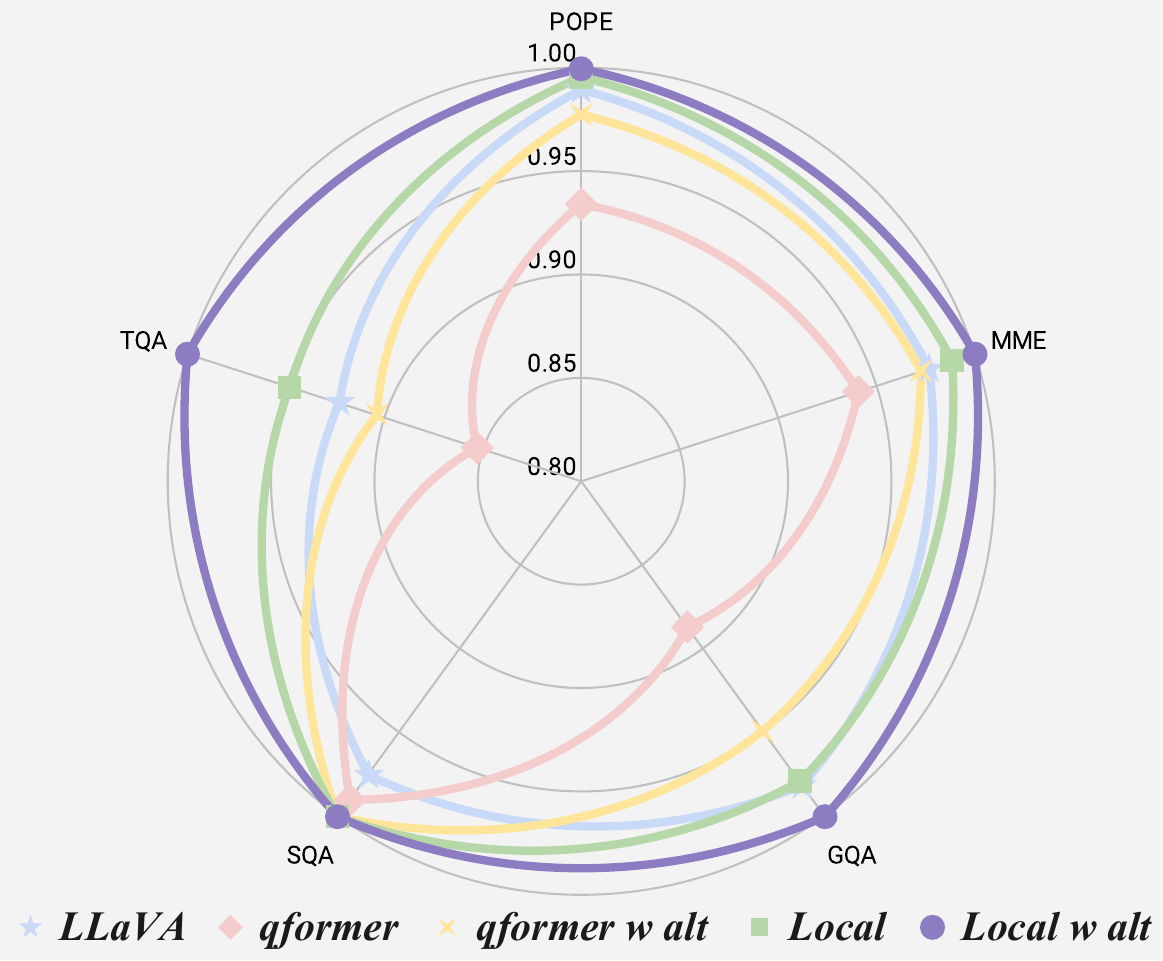}
\caption{\textbf{Significance of Alternating Training.} The reported values represent the performance ratio of baselines to the best one: Local with alternating training.}
    \label{fig:teaser_alt}
  \end{center}
\vspace{-0.3cm}
\end{wrapfigure}

\textbf{Local Feature Compression.} In our approach to local feature compression, we implement a query former architecture leveraging $N_q$ query embeddings, denoted as $M_Q\in\mathbb{R}^{N_q\times D_I}$. Here, $D_I$ represents the dimensionality of the image features obtained from the vision encoder. Notably, we strategically set $N_q$ to be smaller than the original token count derived from the vision encoder. This deliberate reduction serves to alleviate the computational burden while preserving essential information. Through the application of an attention mechanism, we orchestrate the interaction between these query embeddings and the local image features. Here, the learnable embeddings act as the query for attention, directing the model's focus towards pertinent aspects of the local features. The resultant outcome is a condensed representation of local features, consisting of $N_q$ tokens. By judiciously balancing computational efficiency with information retention, our compression strategy enhances the scalability and effectiveness of LMMs in handling diverse tasks.

\textbf{Text-Guided Router.} Our approach seeks to further alleviate computational burden by feature selection. We argue that not all parts of the local image are relevant to the questions posed. For instance, in Fig.~\ref{fig:router1}, the question \textit{"What breed is the dog?"} pertains only to specific local image regions, indicating that discarding irrelevant features can significantly reduce abundant image information. In this work, we explore a simple cosine-similarity routing strategy for its simplicity and effectiveness. Given the text embedding $z_x\in\mathbb{R}^{L_x\times D}$ and the projected local image feature $z_v\in\mathbb{R}^{L_v\times D}$, we compute scores as $S\in\mathbb{R}^{L_v\times L_x}=z_vz_x^T$. Averaging text tokens and applying softmax to image tokens yields $S_{cosine}\in\mathbb{R}^{L_v}$. Once scores or relevance indicators are obtained for each local feature, we employ an adaptive selection strategy. Specifically, we sort scores from highest to lowest and select features until the accumulated score surpasses a threshold $\gamma$. This hyperparameter balances the efficiency and completeness of local features. Our experiments reveal that selecting specific local features does not diminish performance. On the contrary, by disregarding irrelevant features and using fewer tokens, we achieve superior performance across most benchmarks. During training, Gaussian noise from $\mathcal{N}(0,0.1)$ is added to the selection score to maintain the diversity of representations.

\textbf{Alternating Training Scheme.} Our training methodology for the vision-language adapter and local compression layer involves a nuanced three-stage process. Initially, in Stage I (see Fig.~\ref{fig:stage1}), the adapter undergoes training using the global image. Subsequently, in Stage II (see Fig.~\ref{fig:stage2}), the adapter remains fixed while the local compression layer is exclusively trained using local patches. Finally, in Stage III (see Fig.~\ref{fig:router1}), both global and local features are simultaneously trained. While we delve into theoretical underpinnings in Section \ref{sec:theory}, empirical insights also bear significance. Our experimentation reveals that \textbf{simultaneously training the adapter and local compression layer in a single stage yields suboptimal performance}. This discrepancy arises from the model's predominant focus on global features, as the global feature requires only projection with no information loss, making it easier to learn. Hence, we confine the use of local patches to Stage II for compression layer training. This approach ensures sequential learning, first projection, then compression of local features (Local vs Local w. alt in Fig.~\ref{fig:teaser_alt}). Additionally, alternating training can bridge the performance gap between two common adapters: MLP and query former, as mentioned earlier. Specifically, when employing attention-based models as adapters, which offer more flexibility but may exhibit inferior performance compared to simple MLP adapters~\citep{zeng2023matters}, we find that alternating training significantly enhances performance (query former vs query former w. alt\footnote{Firstly, the model learns query embeddings to interact with image features, and secondly, it projects the image features to the LLM dimension. Similarly, in this alternating training approach, we first learn the projection head and then focus on learning attention mechanism parameters and query embeddings.} in Fig.~\ref{fig:teaser_alt}). Such a scheme may illuminate future work, facilitating the training of more complex yet flexible adapter options.

\begin{figure*}[t]
\centering
\subfigure[\textbf{Adaptive Slicing.}]{
\begin{minipage}[t]{0.35\linewidth}
 \includegraphics[width=\linewidth]{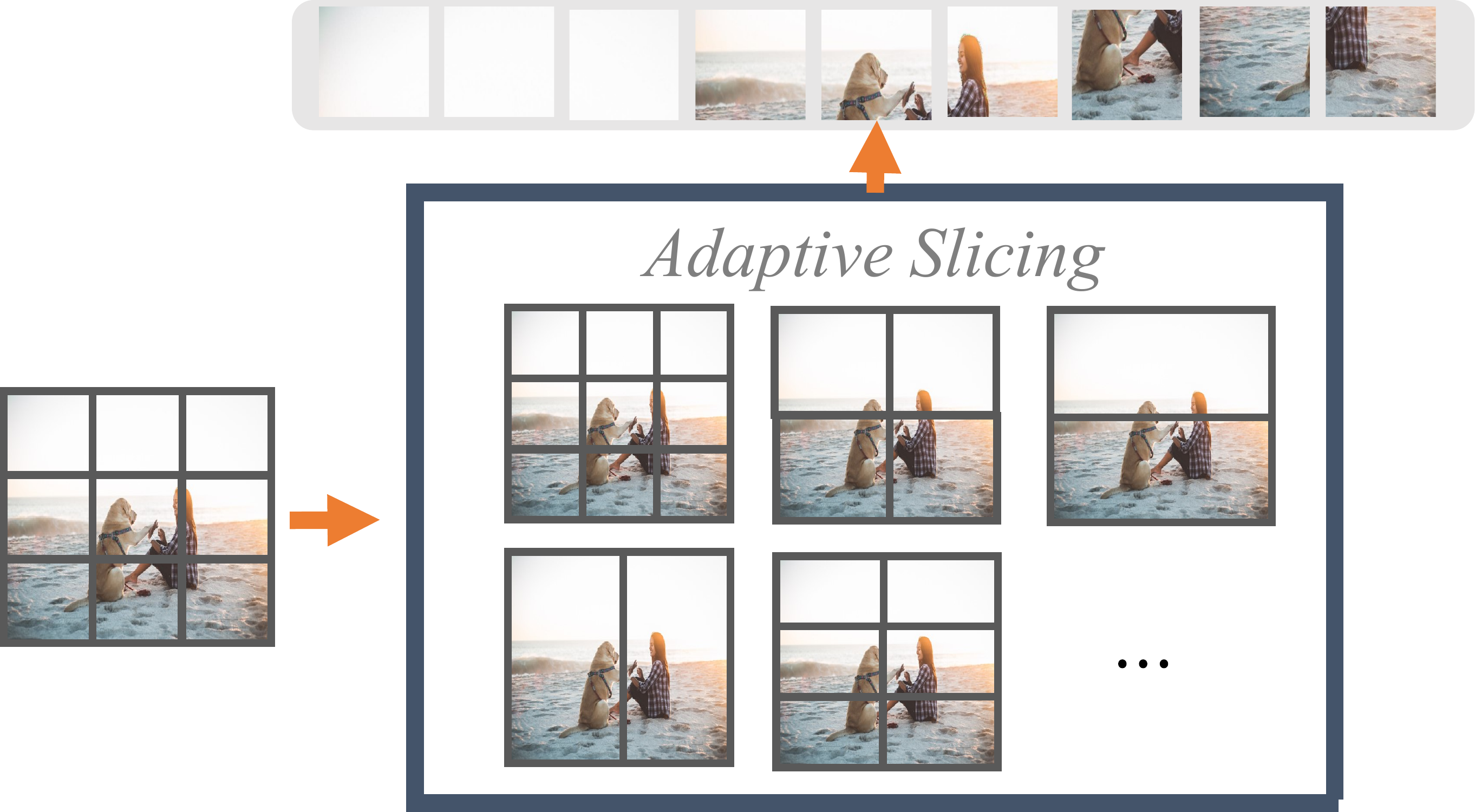}
    \label{fig:slcing}
\end{minipage}%
}%
\subfigure[\textbf{Stage I.}]{
\begin{minipage}[t]{0.45\linewidth}
\centering
 \includegraphics[width=\linewidth]{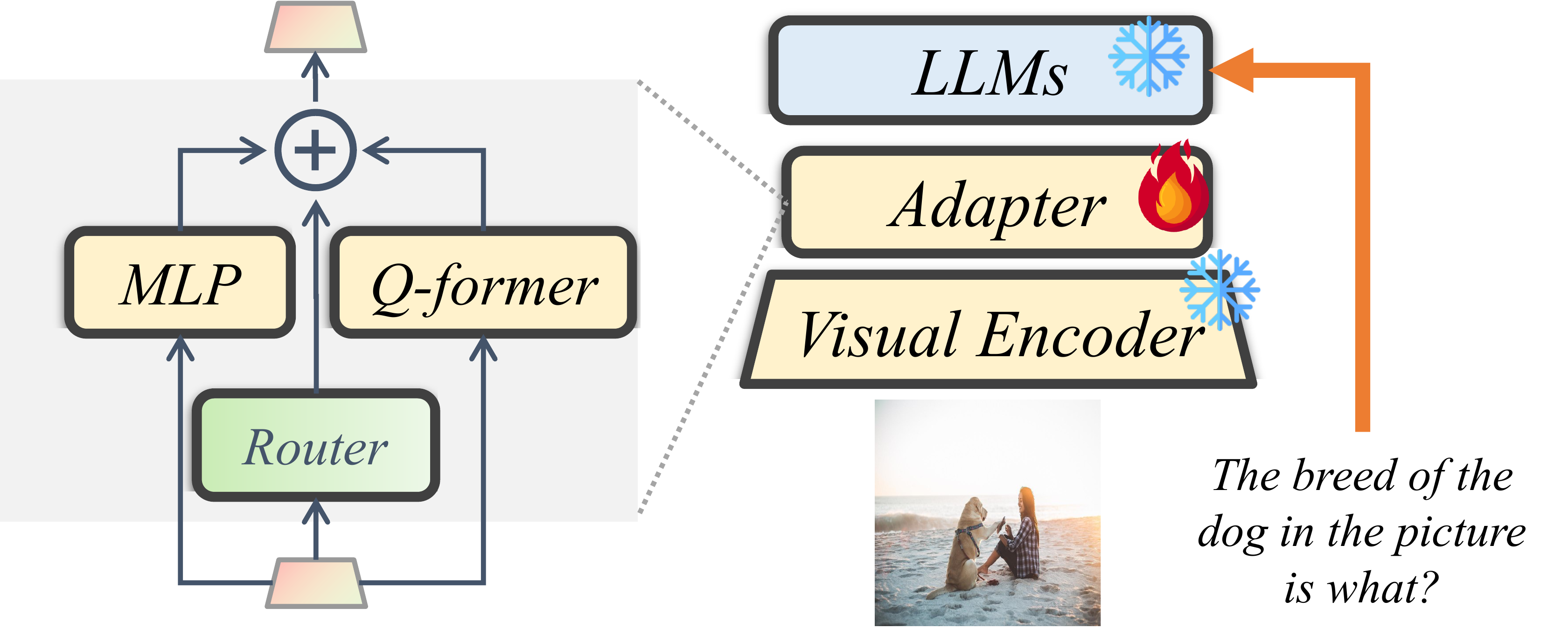}
    \label{fig:stage1}
\end{minipage}%
}%

\subfigure[\textbf{Stage II.}]{
\begin{minipage}[t]{0.38\linewidth}
 \includegraphics[width=\linewidth]{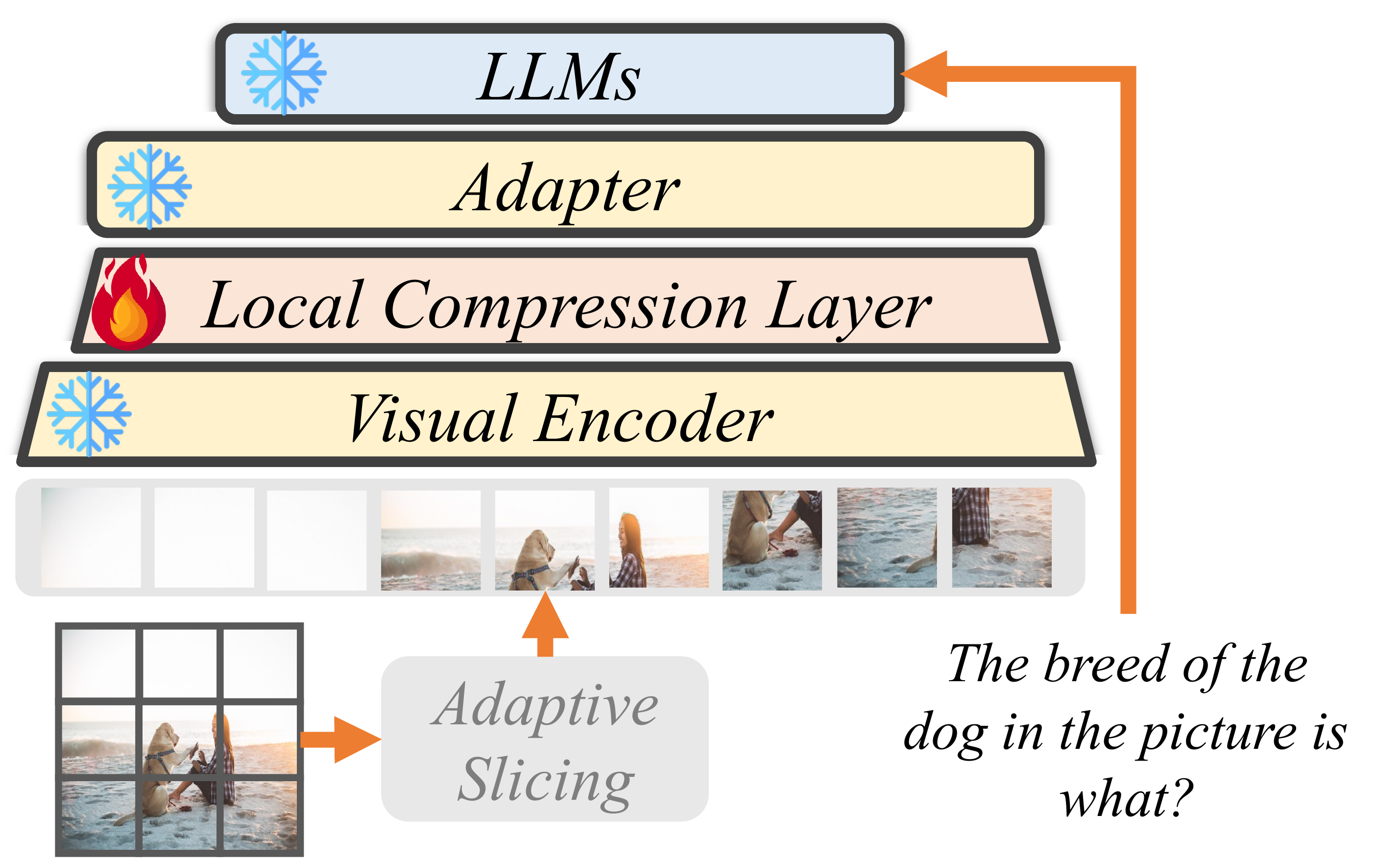}
    \label{fig:stage2}
\end{minipage}%
}%
\subfigure[\textbf{Stage III.}]{
\begin{minipage}[t]{0.55\linewidth}
\centering
 \includegraphics[width=\linewidth]{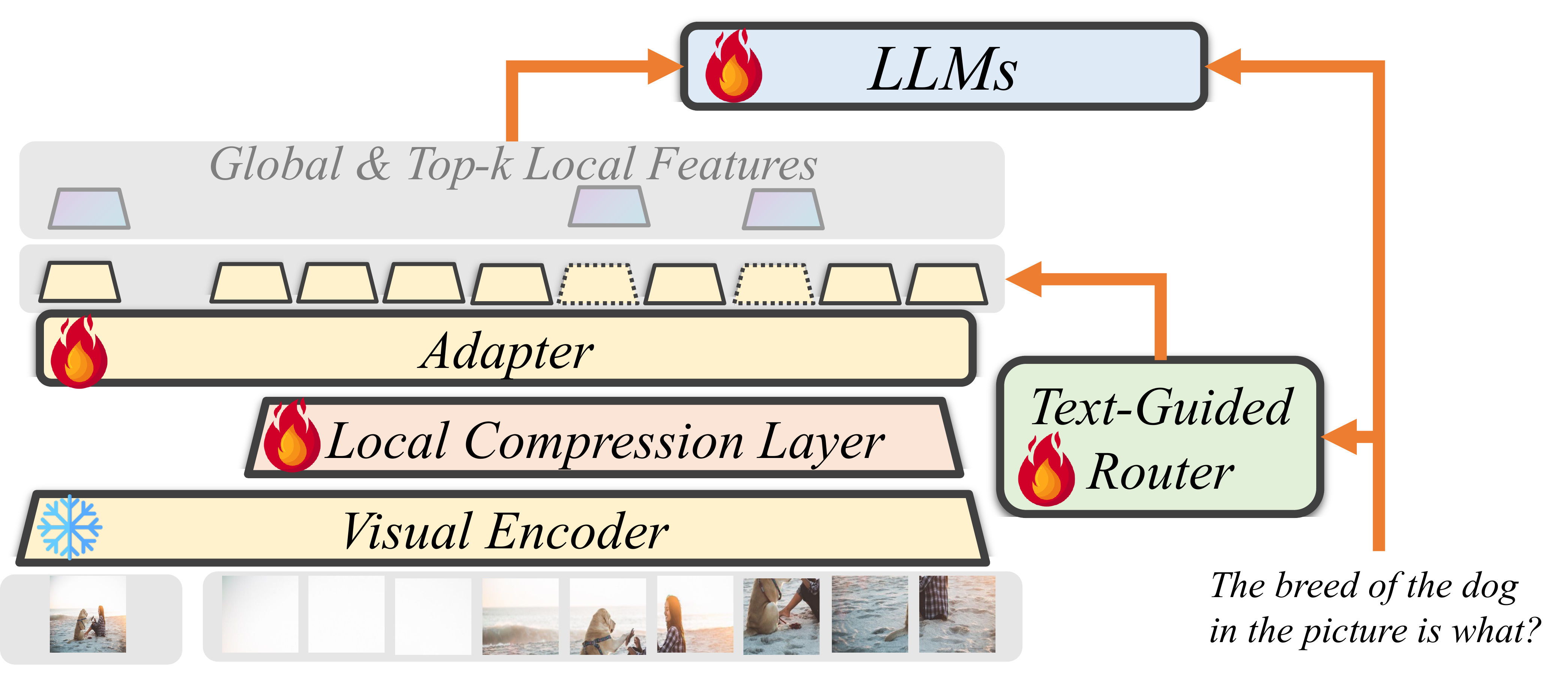}
    \label{fig:router1}
\end{minipage}%
}%
\centering
\vspace{-0.2cm}
\caption{\textbf{Innovative Training Approach}: (b) Refining the visual adapter with mixture of experts, (c) optimizing local compression layers, and (d) instructions fine-tuning. Here, \abbr efficiently processes images via slicing, projection, and selecting pertinent local features aligned with query.}
\label{fig:router}
\end{figure*}
\vspace{-0.1cm}
\subsection{Importance of Alternating Training for Optimizing Bilinear Functions}\label{sec:theory}
\vspace{-0.1cm}
\begin{tcolorbox}[top=1pt, bottom=1pt, left=1pt, right=1pt]
\textbf{Takeaway:}~\textit{Alternating Training is pivotal for the success of \abbr. Our demonstration in this subsection will also shed light on why it's common practice to initially freeze one modality in multi-modal learning and optimize the adapter of one modality before engaging in joint optimization across multiple modalities. All the proof can be found in~\cref{sec:app_proof}.}
\end{tcolorbox}
Bilinear forms are prevalent in deep learning models, especially in multi-modal learning where the representation from two different modalities is frequently aligned through a dot product. Let the target matrix $X \in \mathbb{R}^{d\times d}$ be expressed as $X = ab^{\top} + ba^{\top}$, where $a, b \in \mathbb{R}^d$ are two normalized vectors. Our objective is to find the rank-$1$ matrix to approximate $X$, which leads to the following optimization problem:
\[
\min\limits_{u, v \in \R^d} \; \L(u,v) = \frac{1}{2}\left|uv^{\top} - X\right|^2
\]
In LMMs, the vision encoder and adapter can be perceived as the vision modality, while others are categorized as the text modality, and the target $X$ can be seen as the best LMM. Within our framework, we treat the adapter and local compression layer as distinct functions, aiming to approximate the optimal modality adaptation parameter. We recognize that assuming both $a$ and $b$ to be merely vectors is a simplification that may not fully capture the complexity of the entire model. However, this simplification allows us to analyze the problem more effectively and derive valuable insights from it.
 It is well known that the optimal solution for $u$ is aligned with the top eigenvector of $XX^{\top}$, i.e.
\[
XX^{\top} = (ab^{\top} + b^{\top}a)(ba^{\top} + a^{\top}b) = A\underbrace{\left(\begin{array}{cc}
1 & a^{\top}b \\
a^{\top}b & 1
\end{array}\right)}_{:=M} A^{\top}
\]
where $A = (a, b) \in \R^{d\times 2}$ and $M \in \R^{2\times 2}$. Since $X$ is constructed through $a$ and $b$, $u$ has to lie in the subspace spanned by $a$ and $b$ and thus can be written $u$ as
\[
u = \alpha a + \beta b = A\underbrace{\left(
\begin{array}{c}
\alpha \\
\beta
\end{array}
\right)}_{:= z}
\]
Hence, the optimal solution for $z = (\alpha, \beta)^{\top}$ should be aligned with the largest eigenvector of matrix $M$. 
Let $u_0$ and $v_0$ be the initial solution and is given in the following form
\[
u_0 = \alpha_0 a + \beta_0 b, \; v_0 = \beta_0 a + \alpha_0 b
\]
where $\alpha_0, \beta_0 \in \R$ are two scales. Here, we utilize the fact that $u$ and $v$ have to lie in the subspace spanned by $a$ and $b$. Then we state the following theorem:

\begin{thm}
Using the gradient descent method, we update the solution $u_t$ and $v_t$ as
\begin{eqnarray}
u_{t+1} = u_t - \eta\left(u_t v_t^{\top} - X\right) v_t, \quad v_{t+1} = v_t - \eta\left(v_t u_t^{\top} - X\right)u_t
\label{equ:train-sim}
\end{eqnarray}
Simultaneously updating $u$ and $v$ using Eq.~(\ref{equ:train-sim}) is less ideal for optimizing the objective function of bilinear form, as the gradient descent update does not necessarily converge to the optimal solution.
\label{them:sim}
\end{thm}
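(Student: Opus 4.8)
The strategy is to exploit the symmetry of the problem by reducing the matrix dynamics in $\R^{d\times d}$ to a two-dimensional dynamical system in the coefficients $(\alpha,\beta)$, and then exhibit an invariant manifold — a ``bad'' fixed set — that the simultaneous gradient-descent flow cannot leave but which does not correspond to the global optimum. First I would substitute the ansatz $u_t = \alpha_t a + \beta_t b$, $v_t = \gamma_t a + \delta_t b$ into the update \eqref{equ:train-sim}, using $\|a\|=\|b\|=1$ and writing $\rho := a^\top b$. Expanding $u_t v_t^\top - X = u_t v_t^\top - ab^\top - ba^\top$ and multiplying by $v_t$ on the right collapses everything onto the span of $a$ and $b$, so the update for $u_{t+1}$ becomes a pair of scalar recursions for $(\alpha_{t+1},\beta_{t+1})$ in terms of $(\alpha_t,\beta_t,\gamma_t,\delta_t)$, and symmetrically for $v$. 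This is the bookkeeping step; it is routine but must be done carefully because the Gram matrix $M = \bigl(\begin{smallmatrix}1 & \rho\\ \rho & 1\end{smallmatrix}\bigr)$ appears when converting between the coefficient vector and the actual inner products.

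**Identifying the bad invariant set.**
The key observation is that the chosen initialization $u_0 = \alpha_0 a + \beta_0 b$, $v_0 = \beta_0 a + \alpha_0 b$ is ``swapped'': $v_0$ has the roles of $\alpha_0,\beta_0$ interchanged relative to $u_0$. I would show this swap-symmetry is preserved by the dynamics — i.e. if at step $t$ we have $v_t = \beta_t a + \alpha_t b$ whenever $u_t = \alpha_t a + \beta_t b$, then the same holds at $t+1$. This follows because the update equations for $u$ and $v$ are themselves symmetric under simultaneously swapping $(u\leftrightarrow v)$ and $(a\leftrightarrow b)$, together with the fact that $X = ab^\top + ba^\top$ is itself invariant under $a\leftrightarrow b$. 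Granting this, the four-dimensional system collapses to a two-dimensional one in $(\alpha_t,\beta_t)$ alone. Within this reduced system I would then locate the fixed points: the stationary equation $uv^\top v = Xv$ under the swap constraint gives a small algebraic system in $(\alpha,\beta)$, whose solutions include symmetric configurations $\alpha = \pm\beta$ that are stationary for the flow but for which $uv^\top$ is a symmetric rank-one matrix whose column space is spanned by $a+b$ (or $a-b$), not by the top eigenvector of $M$. Comparing the value of $\L$ at such a point against the true minimum — where $z=(\alpha,\beta)^\top$ aligns with the leading eigenvector of $M$ and $u,v$ are parallel — shows a strict gap whenever $\rho \neq 0$ and $\alpha_0 \neq \beta_0$, with the sign of $\rho$ determining which symmetric direction is selected.

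**Finishing and the main obstacle.**
To conclude that gradient descent ``does not necessarily converge to the optimal solution,'' it suffices to produce one initialization of the stated swapped form from which the iterates either converge to a suboptimal stationary point or fail to approach the optimum; the invariant-set argument does exactly this, since the optimum requires $u \parallel v$ (both aligned with the same eigenvector), which is incompatible with the persistent swap structure unless $\alpha_t - \beta_t \to 0$, and one can check that $\alpha_t - \beta_t$ does not decay to zero (indeed the symmetric subspace $\{\alpha=\beta\}$ is typically repelling in one of the relevant directions, or at best neutrally stable, so a generic swapped start stays bounded away from it). I expect the main obstacle to be the stability analysis of the reduced two-dimensional map: showing rigorously that the swapped initialization does not drift into the good region requires linearizing the coefficient recursion around the suboptimal fixed point, computing its Jacobian (which will involve $\rho$, $\eta$, and the fixed-point coordinates), and verifying the relevant eigenvalue has modulus $\le 1$ so the bad set is attracting (or at least not repelling). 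A cleaner route that sidesteps the full stability computation — and the one I would try first — is to track a single scalar Lyapunov-type quantity such as $w_t := \alpha_t - \beta_t$ or the angle between $u_t$ and $v_t$, show it satisfies an autonomous recursion that has a nonzero fixed point, and argue it is bounded away from the value it would need for optimality; this converts the whole question into a one-dimensional fixed-point statement and avoids case analysis on the spectrum of the Jacobian.
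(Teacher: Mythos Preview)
Your reduction via the swap symmetry is correct and is exactly what the paper does: one checks that if $u_t=\alpha_t a+\beta_t b$ and $v_t=\beta_t a+\alpha_t b$ then the same form persists at step $t+1$, collapsing the dynamics to a two-dimensional recursion for $z_t=(\alpha_t,\beta_t)^\top$. The paper then diagonalises this recursion in the eigenbasis $w_\pm=(1,\pm1)^\top/\sqrt{2}$ of $M$, sets $\tau_t=w_+^\top z_t$, $\nu_t=w_-^\top z_t$, obtains the decoupled updates $\tau_{t+1}=\bigl(1+\eta(\lambda_+-|u_t|^2)\bigr)\tau_t$ and $\nu_{t+1}=\bigl(1+\eta(\lambda_--|u_t|^2)\bigr)\nu_t$ (equivalently an ODE in the small-$\eta$ limit), and simply exhibits the two stationary points $(\tau_*,\nu_*)=(1,0)$ and $(0,1)$, the second being suboptimal.

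The gap in your plan is the stability claim. You assert that the optimal subspace $\{\alpha=\beta\}$ (equivalently $\{\nu=0\}$, for $\rho:=a^\top b>0$) is ``typically repelling \ldots\ or at best neutrally stable'' and that $\alpha_t-\beta_t$ does not decay to zero from a generic swapped start. This is false. From the decoupled form one has $\frac{d}{dt}\log(\nu_t/\tau_t)=(\lambda_--|u_t|^2)-(\lambda_+-|u_t|^2)=-2\rho$, so for $\rho>0$ the ratio $\nu_t/\tau_t$ decays exponentially and $\alpha_t-\beta_t=\sqrt{2}\,\nu_t\to0$ whenever $\tau_0\neq0$. Linearising the flow at $(\tau_*,\nu_*)=(1,0)$ gives Jacobian eigenvalues $-2(1+\rho)$ and $-2\rho$, both negative: the optimal point is a stable node, not a saddle. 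Your proposed Lyapunov quantity $w_t=\alpha_t-\beta_t$ therefore does the opposite of what you need.

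The argument that actually works is the mirror image of yours: the \emph{suboptimal} subspace $\{\alpha=-\beta\}$ (i.e.\ $\{\tau=0\}$) is invariant, because the $\tau$-recursion is homogeneous in $\tau$. Starting with $\alpha_0=-\beta_0$ keeps $\tau_t\equiv0$ for all $t$, and on that line $\nu_t\to1$, so the iterates converge to the suboptimal stationary point with $u\propto a-b$. This single initialisation is what witnesses ``does not necessarily converge to the optimal solution''; the suboptimal point is a saddle (Jacobian eigenvalues $2\rho>0$ and $-2(1-\rho)<0$), so only this measure-zero slice is trapped, but that is all the theorem asserts.
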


We will demonstrate that the issue with gradient descent (or more accurately, simultaneously updating $u$ and $v$) can be effectively addressed by alternating optimization. Specifically, we will optimize $v$ with fixed $u$, and then optimize $u$ with fixed $v$. We will show that this approach converges to the optimal solution by alternating optimization. 
\begin{thm}
Let $u_0 = \alpha_0 a + \beta_0 b$. We rewrite the sequential solution $u_t$ obtained by alternating optimization as $u_t = \alpha_t a + \beta_t b$. $z_t = (\alpha_t, \beta_t)^{\top}$ evolves over iterations by $z_{t+1} = \frac{1}{|u_t|^2|v_t|^2}M^2 z_t$
\label{them:alter}
\end{thm}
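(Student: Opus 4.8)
The plan is to track how the alternating update acts on the coordinate vector $z_t = (\alpha_t, \beta_t)^\top$ that represents $u_t = A z_t$, and show that one full round of alternating optimization (first minimize over $v$ with $u=u_t$ fixed, then minimize over $u$ with the new $v$ fixed) amounts to applying the linear map $\frac{1}{|u_t|^2 |v_t|^2} M^2$ to $z_t$. First I would solve the two inner subproblems in closed form. With $u$ fixed, $\L(u,v) = \frac12 |uv^\top - X|_F^2$ is a convex quadratic in $v$; setting $\nabla_v \L = 0$ gives the normal equation $|u|^2 v = X^\top u$, hence the exact minimizer is $v_\star = X^\top u / |u|^2$. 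Symmetrically, with $v$ fixed, the minimizer over $u$ is $u_\star = X v / |v|^2$ (using $X = X^\top$). So starting from $u_t$, the alternating step produces $v_{t+1} = X u_t / |u_t|^2$ and then $u_{t+1} = X v_{t+1} / |v_{t+1}|^2 = X^2 u_t / (|u_t|^2 |v_{t+1}|^2)$.

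Next I would push this through the subspace parametrization established earlier in the excerpt. Since $u_t = A z_t$ and $X = A M A^\top$ with $A = (a,b)$, we have $X u_t = A M (A^\top A) z_t$; but one must be careful that $A^\top A$ is not the identity — it equals $\left(\begin{smallmatrix} 1 & a^\top b \\ a^\top b & 1 \end{smallmatrix}\right)$, which is exactly the matrix $M$ from the excerpt. Therefore $X u_t = A M^2 z_t$, and more generally $X^2 u_t = A M^4 z_t$ after applying $X = A M A^\top$ twice and collapsing the inner $A^\top A = M$. Dividing by the scalar normalizers $|u_t|^2 |v_{t+1}|^2$ gives $u_{t+1} = A \left( \frac{M^4}{|u_t|^2 |v_{t+1}|^2} z_t \right)$. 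To match the claimed recursion $z_{t+1} = \frac{1}{|u_t|^2 |v_t|^2} M^2 z_t$, I would (a) re-index so that $v_t$ in the statement refers to the $v$ produced in the same round, i.e. $v_{t+1}$ here, and (b) observe that the $v$-update contributes one factor of $M$ and absorb the bookkeeping: writing the half-step $z$-vector for $v_{t+1}$ as $M z_t / |u_t|^2$ (so $|v_{t+1}|^2 = z_t^\top M^3 z_t / |u_t|^4$ via $|Aw|^2 = w^\top M w$), the $u$-update multiplies by $M$ again, yielding $z_{t+1} \propto M^2 z_t$ with precisely the normalizer $|u_t|^2 |v_t|^2$ once one substitutes the norm identities. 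Doing this substitution carefully is the crux that turns the naive $M^4$ into the stated $M^2$.

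The main obstacle I anticipate is the normalization bookkeeping: the factors $|u_t|^2$ and $|v_t|^2$ are themselves $z$-dependent quadratic forms in $M$ ($|u_t|^2 = z_t^\top M z_t$ and similarly for $v_t$), so one has to be disciplined about which iterate each norm refers to and confirm that the combination of $A^\top A = M$ contributions and the two scalar denominators collapses cleanly to a single factor $\frac{1}{|u_t|^2|v_t|^2}$ multiplying $M^2$, rather than some residual ratio. A secondary subtlety is handling the degenerate case $a^\top b = \pm 1$, where $A$ has rank one and $M$ is singular; I would either assume $a,b$ linearly independent (generic case) or note that the recursion still holds formally with the understanding that $z_t$ is only defined modulo the kernel. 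Once the closed-form inner minimizers and the identity $A^\top A = M$ are in hand, the rest is algebraic substitution, and the theorem follows; it also immediately sets up the convergence analysis, since iterating $z_{t+1} \propto M^2 z_t$ is just power iteration on $M^2$, which converges to the top eigenvector of $M$ as desired.
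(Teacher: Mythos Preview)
There is a genuine gap: your identity $X = A M A^{\top}$ is false. What the excerpt established is $XX^{\top} = A M A^{\top}$, not $X$ itself. In fact $X = ab^{\top} + ba^{\top} = A J A^{\top}$ with $J = \left(\begin{smallmatrix}0&1\\1&0\end{smallmatrix}\right)$, so combining with the (correct) observation $A^{\top}A = M$ gives $X u_t = A\, J M\, z_t$ and $X^2 u_t = A\,(JM)^2 z_t$. Since $J$ and $M$ commute and $J^2 = I$, one has $(JM)^2 = M^2$, which is exactly the $M^2$ in the statement. Your computation instead produces $X u_t = A M^2 z_t$ and $X^2 u_t = A M^4 z_t$, and the mismatch with the claimed $M^2$ is \emph{not} a normalization artifact: the factors $|u_t|^2$ and $|v_t|^2$ are scalars and cannot lower the power of $M$ acting on $z_t$. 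The paragraph where you try to ``turn the naive $M^4$ into the stated $M^2$'' by bookkeeping is also internally inconsistent --- you first assert the $v$-step contributes $M^2$ (via $Xu_t = A M^2 z_t$) and a few lines later assume it contributes only $M$.

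The paper's proof avoids the matrix identity altogether and just expands $Xu_t$ and $Xv_t$ directly in the basis $(a,b)$. The key device is to record $v_t$ with \emph{swapped} coefficients, $v_t = \beta_t' a + \alpha_t' b$; with this convention each half-step acts as $z \mapsto \frac{1}{\|\cdot\|^2} M z$ on the $(\alpha,\beta)$ pair, and the two swaps cancel after a full round, yielding $z_{t+1} = \frac{1}{|u_t|^2|v_t|^2} M^2 z_t$. Your matrix approach can be salvaged in one line once you replace $X = A M A^{\top}$ by $X = A J A^{\top}$ and use $(JM)^2 = M^2$; everything else you wrote (closed-form minimizers, the indexing remark, and the power-iteration consequence) is fine.
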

That is, alternating optimization ensures that $z_{t} \propto M^{2t}z_0$, implying that $z_t$ is guaranteed to converge to the largest eigenvector of $M$, thus resolving the limitation of gradient descent.

\subsection{Expanding Dataset Scope with Challenging Reasoning Tasks}
\begin{figure}
\vspace{-0.1cm}
    \centering
    \includegraphics[width=0.95\linewidth]{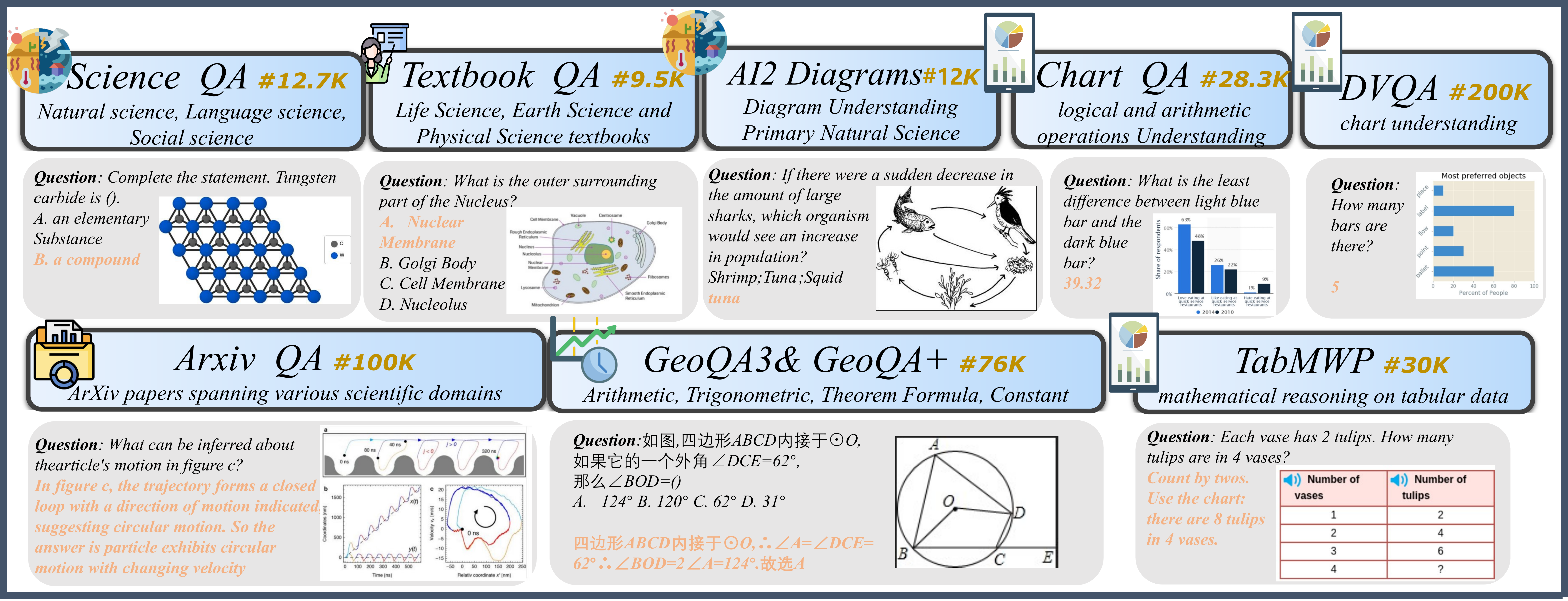}
    \caption{\textbf{The used science and mathematical reasoning tasks in this paper}.}
    \label{fig:science_dataset}
    \vspace{-0.1cm}
\end{figure}

\textbf{Generation of Source Data and Instruction Data.} The creation of SMR involves a meticulous amalgamation of publicly available datasets, comprising Arxiv-QA~\citep{li2024multimodal}, ScienceQA~\citep{lu2022learn}, MATH-Vision~\citep{wang2024measuring}, TextBookQA~\citep{kembhavi2017you}, GeoQA3~\citep{chen2021geoqa}, Geometry3K~\citep{lu2021inter}, TabMWP~\citep{lu2022dynamic}, DVQA~\citep{kafle2018dvqa}, AI2D~\citep{kembhavi2016diagram}, and ChartVQA~\citep{masry2022chartqa}. The variety of question types and associated images sourced from these datasets is depicted in Fig.~\ref{fig:science_dataset}, presenting a distinctive challenge to existing instruction datasets, as illustrated in Fig.~\ref{fig:vqa_dataset}. The disparities between SMR and conventional instruction tuning datasets manifest in two key aspects: (1) \textbf{Challenging Reasoning Tasks}. Many of the tasks in Physical/Social science and mathematics demand advanced reasoning abilities. Notably, datasets such as Arxiv-QA, GeoQA3, and TabMWP offer complete reasoning paths, including intermediate steps for deriving final results. In such cases, the model is tasked not only with mastering foundational knowledge but also with articulating complex reasoning processes—a notably more demanding endeavor. (2) \textbf{Demand for Image Detail Understanding}. All tasks necessitate a profound understanding of visual details because many images contain rich annotation information or questions requiring comprehensive visual analysis. This aspect is particularly beneficial for training our high-resolution framework. Further elucidation on datasets and specific construction methodologies can be found in~\cref{sec:related_work}. To ensure the accuracy of our data, we carefully filter it after collection. This involves identifying and fixing issues like blurry images or jumbled text, unrelated image-text pairs, and incorrect reasoning paths that can't lead to correct answers or might lead to wrong conclusions. For the latter, we use GPT-4V to create new, accurate reasoning paths. 

\textbf{Statistics and Analysis.} In Fig.~\ref{fig:lengthes}, we illustrate the differences in statistics between SMR and existing instruction tuning datasets. To standardize multi-round conversations, we aggregate them into one-round and calculate the average length. We employ LLaVA to determine the maximum length such that $99\%$ of the data falls within the interval. LLaVA~\citep{liu2023improved} comprises 665K instruction tuning data instances characterized by short queries and answers. Similarly, LLaVAR~\citep{zhang2023llavar} exhibits comparable patterns to LLaVA. Conversely, ShareGPT4V~\citep{chen2023sharegpt4v}, a more comprehensive dataset derived from 100K high-quality captions generated by advanced GPT4-Vision models, features longer generation lengths, indicative of more detailed and complex captions. In contrast, SMR demonstrates longer query texts compared to existing training corpora, reflecting the need for detailed descriptions or background information to elucidate scientific or mathematical problems. Additionally, since some of our datasets focus solely on question-answer tasks, approximately $50\%$ of instances feature shorter answer lengths. However, owing to the task complexity, particularly those datasets with extensive reasoning paths aiming to train models to comprehend intricate chains of reasoning, SMR exhibits a higher ratio of longer sentences compared to LLaVA and LLaVAR.

\begin{figure*}[t]
\subfigure[{Average length of query text.}]{
\begin{minipage}[t]{0.49\linewidth}
\centering
 \includegraphics[width=\linewidth]{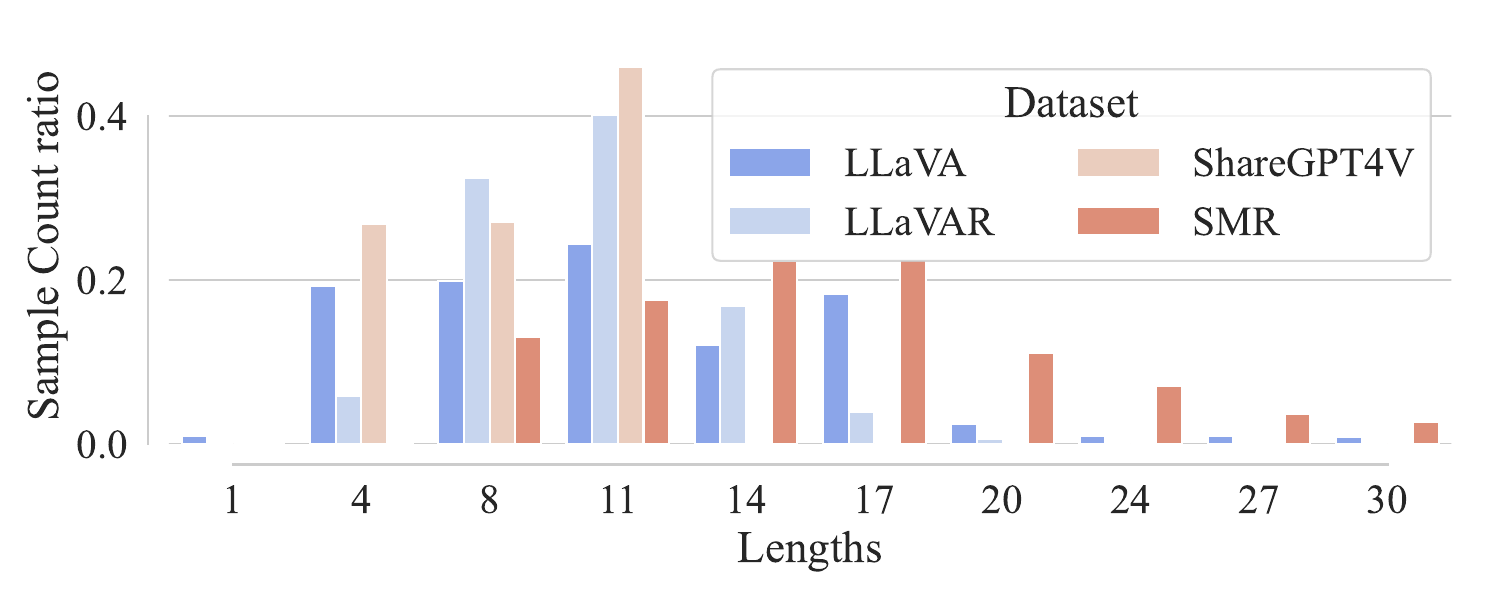}
    \label{fig:length_query}
\end{minipage}%
}%
\subfigure[{Average length of answer text.}]{
\begin{minipage}[t]{0.49\linewidth}
 \includegraphics[width=\linewidth]{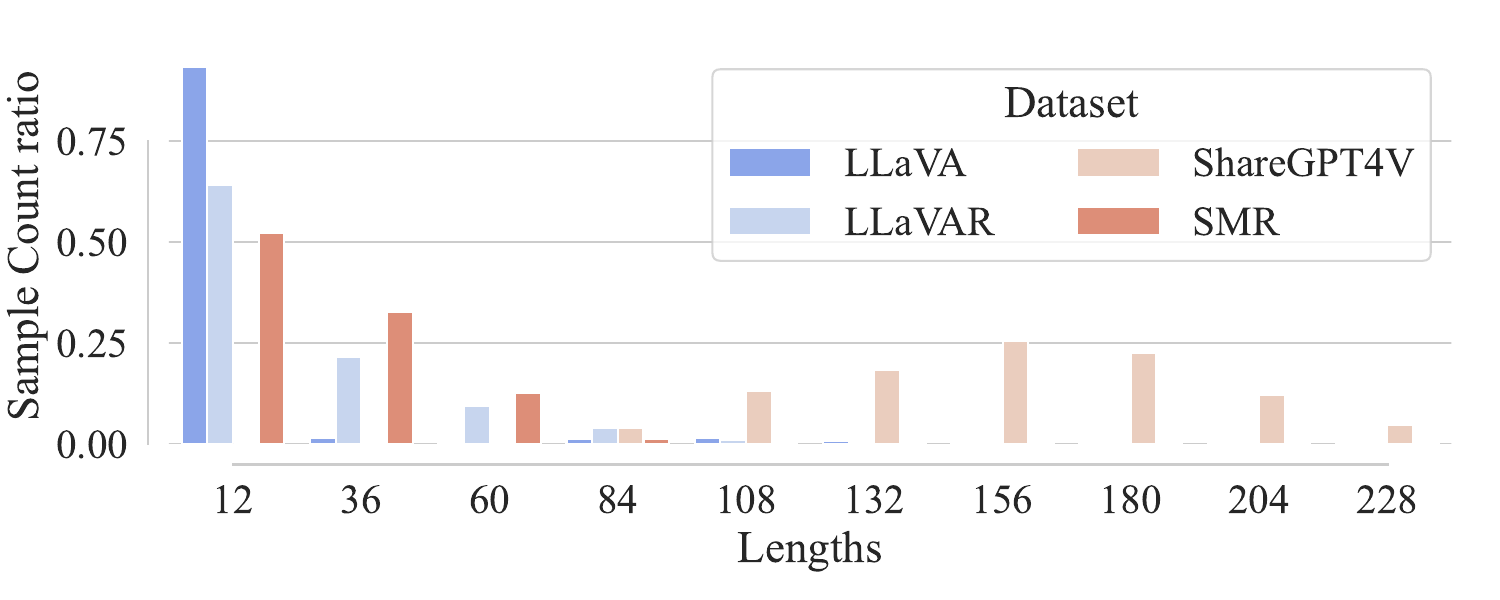}
    \label{fig:length_answer}
\end{minipage}%
}%
\centering
\vspace{-0.2cm}
\caption{\textbf{Comparison of the average lengths of query and answer texts across different datasets.}}
\label{fig:lengthes}
\end{figure*}

\vspace{-0.1cm}
\section{Experiment}
\vspace{-0.1cm}
\abbr is evaluated against both open-source and closed-source models across various domains, encompassing General QA and Open-ended Generation, Math Reasoning, Science, and Hallucination benchmarks, totaling 15 benchmarks. We conduct a comprehensive analysis of the evaluation benchmarks, their associated metrics, and the training hyperparameters for both the initial vision-language alignment pretraining and the subsequent visual instruction tuning stages. Details are provided in Appendix~\ref{sec:exp_app}. The training process utilizes 8×A100 (80G) GPUs.

\vspace{-0.2cm}
\subsection{Numerical Results}
\vspace{-0.2cm}
\textbf{General QA and Open-ended Generation.} We assess the performance of \abbr across a range of benchmarks, encompassing both academic-task-oriented assessments and recent benchmarks tailored for instruction-following LLMs, totaling 9 in all. Our results demonstrate that \abbr consistently achieves superior performance across all benchmarks, even when compared to LLMs of similar scale, despite utilizing significantly smaller pretraining and instruction tuning datasets than other methods~\citep{li2023monkey,bai2023qwen,li2023otterhd}. Notably, \abbr-8B even surpasses Gemini Pro on several benchmarks such as MMB and MME. Prior research has frequently indicated that the LoRA model performs comparably to full fine-tuning, a trend that holds true across many of our benchmarks. However, we observe that models trained with LoRA struggle in instruction-following tasks. This observation is bolstered by the performance gap observed in the LLaVA-bench between \abbr-8B and \abbr-8B$^\dagger$. Furthermore, in our evaluation of MathVerse, we find that while the model prompt explicitly requested concise answers, \abbr-8B$^\dagger$ consistently generates responses with intermediate reasoning, a behavior absent in \abbr-8B. We delve into a detailed analysis of these inconsistencies in~\cref{sec:extend_exp}.

\begin{table}[]
\caption{\textbf{Comparison with SoTA methods on academic-task-oriented datasets and benchmarks for instruction-following LMMs.} VQA$^T$: TextVQA~\citep{8953586}, MMB: MMBench~\citep{liu2024mmbench}, MMB$^C$: MMBench-Chinese~\citep{liu2024mmbench}; MMMU$_v$: validation set of MMMU~\citep{yue2023mmmu}; LLaVA$^W$: LLaVA-Bench (In-the-Wild)~\citep{liu2023visual}; MME$^{P,C}$: Perception and Cognition in MME~\citep{fu2023mme}. The best result is marked in bold, and the second best result is underlined. $^\dagger$ means using LoRA during the instruction tuning phase.}
\resizebox{\textwidth}{!}{%
\begin{tabular}{llcccccccccc}
\toprule
\Gray
\textbf{Method} & \textbf{LLM} & \textbf{VQA$^T$} & \textbf{GQA} & \textbf{VQA$^{v_2}$} & \textbf{MMB} & \textbf{MMB$^C$} & \textbf{MMMU$_v$} & \textbf{LLaVA$^W$} & \textbf{MME$^P$} & \textbf{MME$^C$} & MM-Vet \\
InstructBLIP~\citep{dai2024instructblip} & \textit{Vicuna-7B} & 50.10 & 49.20 & - & 36.00 & 23.70 & 32.90 & 60.90 & - & - & 26.20 \\
Qwen-VL~\citep{bai2023qwen} & \textit{Qwen-7B} & 63.80 & 59.30 & 78.80 & 38.20 & 7.40 & 35.90 & - & - & -  & - \\
LLaVA-1.5~\citep{liu2023improved} & \textit{Vicuna-7B} & 58.20 & 62.00 & 78.50 & 64.30 & 58.30 &  - & 65.40 & 1510 & - &30.50 \\
LLaVA-1.5~\citep{liu2023improved} & \textit{Vicuna-13B} & 61.30 & 63.30 & 80.00 & 67.70 & 63.60 & 36.40 & 72.50 & 1531 & 295& 35.40\\
ShareGPT4V~\citep{chen2023sharegpt4v} & \textit{Vicuna-7B} & - & - & 80.60 & 68.80 & 62.20 & - & 72.60 & 1567 & 303 &37.60 \\
LLaVA-1.5~\citep{liu2023improved} & \textit{Llama3-8B} & 58.94 & 61.94 & 79.49 & 72.94 & 67.70 & 38.00 & 70.50 & 1544 & 328 & 34.80 \\ \cmidrule{2-12}  \Gray
\multicolumn{12}{c}{\textit{With High Resolution}} \\
OtterHD-8B~\citep{li2023otterhd} & \textit{Fuyu-8B} & - & - & - & 58.30 & - & - & - & 1223 & 331 & 26.30 \\
Monkey~\citep{li2023monkey} & \textit{Qwen-7B} & - & 60.70 & 80.30 & 72.40 & 67.50 & - & - & 1522 & \textbf{401} & 33.00 \\
LLaVA-HD~\citep{liu2023improved} & \textit{Vicuna-13B} & 62.50 & \textbf{64.70} &\textbf{81.80} & 68.80 & 61.90 & - & 72.00 & 1500 & - & - \\ 
LLaVA-NeXT~\citep{liu2024llavanext} & \textit{Vicuna-13B} & - & - & - & - & - & 35.90 & 72.30 & {1575} & 316 & \textbf{48.40} \\ \cmidrule{2-12} 
\Gray  
\multicolumn{12}{c}{\textit{Ours}} \\
\texttt{SliME-7B} & \textit{Vicuna-7B} & 64.39 & 63.13 & 80.32 & 69.32 & 61.85 & 37.20 & \textbf{76.10} & 1544 & \underline{383} & 35.40 \\
\texttt{SliME-8B} & \textit{Llama3-8B} & {64.76} & \underline{63.94} & 80.69 & \underline{75.00} & \textbf{71.80} & \textbf{41.20} & \underline{73.90} & \underline{1578} & 337 & 37.40 \\
\texttt{SliME-8B$^\dagger$} & \textit{Llama3-8B} & \underline{65.26} & \underline{63.94} & \underline{80.79} & \textbf{75.42} & \underline{70.96} & \underline{40.80} & 64.90 & {1573} & 346 & 36.80 \\
\texttt{SliME-13B} & \textit{Vicuna-13B} & \textbf{66.11} & 63.60 & 80.43 & 71.13 & 65.20 & 38.00 & 73.10 & \textbf{1606} & 293 & \underline{41.20} \\
\cmidrule{2-12}  
\Gray
\multicolumn{12}{c}{\textit{Private models}} \\
\lightgrey{Gemini Pro}~\citep{team2023gemini} & - & \lightgrey{74.60} & - & - & \lightgrey{73.60} & \lightgrey{74.30} & \lightgrey{47.90} & - & \lightgrey{1496} & \lightgrey{436} & 59.20  \\
\lightgrey{Qwen-VL-Plus}~\citep{bai2023qwen} & - & - & - & - & - & \lightgrey{68.00} & \lightgrey{45.20} & - & \lightgrey{1681} & \lightgrey{502} & -\\
\lightgrey{GPT-4V}~\citep{gpt4} & - & \lightgrey{78.00} & - & - & \lightgrey{77.00} & \lightgrey{74.40} & \lightgrey{58.10} & - & \lightgrey{1409} & \lightgrey{517}  & 56.80\\ \bottomrule
\end{tabular}%
}
\end{table}

\textbf{Scientific, Mathematical, and Hallucination.} We further assess the hallucinatory property and mathematical proficiency of \abbr. As demonstrated in Table.~\ref{tab:math_hall}, \abbr achieves state-of-the-art performance, comparable to Gemini Pro, across all three mathematical benchmarks. Moreover, its performance on the ScienceQA-Img split and hallucination benchmarks is particularly noteworthy, affirming the efficacy of the proposed \abbr.

\begin{table}[]
\caption{\textbf{Comparison with SoTA methods on Science and Mathematical Reasoning benchmarks.} SQA$^I$: ScienceQA-IMG;  MME$^H$: the sum of scores in existence, count, position, color splits of MME benchmark. $^\dagger$ means using LoRA during the instruction tuning phase}\label{tab:math_hall}
\centering
\resizebox{\textwidth}{!}{%
\begin{tabular}{lccccc|ccc}
\toprule
\Gray
\multicolumn{6}{c}{\textit{Science and Mathematical Reasoning}} & \multicolumn{3}{c}{\textit{Hallucation}} \\ \Gray
\textbf{Method} & \textbf{LLM} & \textbf{MathVerse} & \textbf{MathVista} & \textbf{MathVision} & \textbf{ScienceQA} & \textbf{POPE} & \textbf{AMBER} & \textbf{MME$^H$} \\
InstructBLIP~\citep{dai2024instructblip} & \textit{Vicuna-7B} & - & 25.30 & - & 60.50 & - & 81.70 & - \\
Qwen-VL~\citep{bai2023qwen} & Qwen-7B & - & 33.80 & 10.53 & 67.10 & - & 84.90 & 606 \\
LLaVA-1.5 & \textit{Vicuna-7B} & - & 26.10 & 10.20 & 66.80 & 85.90 & 74.70 & - \\
LLaVA-1.5 & \textit{Vicuna-13B} & 7.60 & 26.10 & 13.10 & 71.60 & 85.90 & - & 643 \\
ShareGPT4V & \textit{Vicuna-7B} & 13.10 & 25.80 & 12.50 & 68.40 & - & - & - \\
LLaVA-1.5 & LLama3-8B & 13.80 & 28.40 & 14.75 & 77.64 & 85.10 & 85.00 & 634 \\
\Gray
\multicolumn{9}{c}{\textit{With High Resolution}} \\
OtterHD-8B~\citep{li2023otterhd} & Fuyu-8B & - & 23.40 & - & - & 86.00 & {89.10} & - \\
Monkey~\citep{li2023monkey} & Qwen-7B & - & 34.80 & - & 69.40 & - & 86.00 & - \\
LLaVA-HD~\citep{liu2023improved} & \textit{Vicuna-13B} & - & - & - & 71.00 & \textbf{86.30} & - & - \\
\Gray
\multicolumn{9}{c}{\textit{Ours}} \\
\texttt{SliME-7B} & \textit{Vicuna-7B} & 17.50 & 37.50 & 16.12 & 76.80 & 85.40 & 87.80 & 633 \\
\texttt{SliME-8B} & \textit{LLama3-8B} & \textbf{22.90} & \underline{43.30} & \underline{16.78} & \textbf{84.18} & \underline{86.00} & 88.90 & \underline{671} \\
\texttt{SliME-8B$^\dagger$} & \textit{LLama3-8B} & \underline{21.80} & \textbf{43.60} & {16.12} & \underline{84.13} & \underline{86.00} & \textbf{90.10} &{645} \\
\texttt{SliME-13B} & \textit{Vicuna-13B} & 19.00 & 40.80 & \textbf{18.09} & 80.17 & \textbf{86.30} & \underline{89.40} & \textbf{673} \\
\Gray
\multicolumn{9}{c}{\textit{Private models}} \\
\lightgrey{Gemini Pro}~\citep{team2023gemini} & - & \lightgrey{22.30} & \lightgrey{45.20} & \lightgrey{17.11} & - & - & - & \lightgrey{560} \\
\lightgrey{Qwen-VL-Plus}~\citep{bai2023qwen} & - & \lightgrey{11.80} & \lightgrey{43.30} & - & - & - & - & \lightgrey{670} \\
\lightgrey{GPT-4V}~\citep{gpt4} & - &\lightgrey{ 38.30} & \lightgrey{49.90} & \lightgrey{22.37} & - & - & \lightgrey{87.40} & \lightgrey{595} \\ \bottomrule
\end{tabular}%
}
\end{table}

\vspace{-0.1cm}
\subsection{Ablation Studies and Analysis}

\begin{table}[tb]
    \centering
    \begin{minipage}{.45\textwidth}
      \centering
      \caption{\small \textbf{Ablation results} on the global MOE \colorbox[HTML]{D9EAD3}{ },the token number of the compression layer \colorbox[HTML]{CFE2F3}{ }, the router parameter $\gamma$ \colorbox[HTML]{EAD1DC}{ } and the training data \colorbox[HTML]{FFF2CC}{ }.}\label{tab:ablation}
      \resizebox{0.9\linewidth}{!}{%
      \begin{tabular}{@{}lccccc@{}}
\toprule
\multicolumn{1}{c}{Dataset} & \textbf{POPE} & \textbf{GQA} & \textbf{SQA} & \textbf{VQA$^T$} & \textbf{AMBER} \\ \midrule
\texttt{Baseline}  & 83.80 & 61.94 & 74.30 & 58.94  & 87.70 \\ \cmidrule{2-6}
\rowcolor[HTML]{D9EAD3} 
\texttt{Global MOE}  & 84.10 & 62.58 & 77.00 & 60.57  & 88.70 \\ \cmidrule{2-6}
\rowcolor[HTML]{CFE2F3} 
\texttt{$N_q=64$ } & 83.80 & 62.91 & 76.15 & 63.09 & 87.60 \\
\rowcolor[HTML]{CFE2F3} 
\texttt{$N_q=196$}  & 84.00 & 62.51 & 75.56 & 62.48  & 87.40 \\
\rowcolor[HTML]{CFE2F3} 
\texttt{$N_q=144$}  & 84.20 & 62.96 & 77.09 & 63.58  & 88.20 \\ \cmidrule{2-6}
\rowcolor[HTML]{EAD1DC} 
{\texttt{$\gamma=90\%$}}  & 84.20 & 62.74 & 76.05 & 64.05  & 88.10 \\
\rowcolor[HTML]{EAD1DC} 
{\texttt{$\gamma=75\%$}}  & 84.50 & 63.09 & 77.44 & 63.83 & 88.40 \\
\rowcolor[HTML]{EAD1DC} 
{\texttt{$\gamma=50\%$}}  & 84.90 & 63.11 & 77.09 & 62.75 & 88.40 \\ \cmidrule{2-6}
\rowcolor[HTML]{FFF2CC} 
\texttt{With SMR}  & 84.50 & 62.58 & 82.29 & 59.89  & 88.40 \\ \cmidrule{2-6}
\texttt{Final}   & 84.9 & 63.94 & 84.18 & 64.76 & 88.90 \\ \bottomrule
\end{tabular}%
}
    \end{minipage}%
    \hspace{0.02\textwidth}
    \begin{minipage}{.5\textwidth}
      \raggedleft
    \caption{\textbf{Ablation results} on different treatments for global and local features \colorbox[HTML]{D9EAD3}{ }, and two different training strategies \colorbox[HTML]{CFE2F3}{ }.}\label{tab:ablation_2}
      \resizebox{0.95\linewidth}{!}{%
\begin{tabular}{@{}lccccc@{}}
\toprule
\multicolumn{1}{c}{Dataset} & \textbf{POPE} & \textbf{GQA} & \textbf{SQA} & \textbf{VQA$^T$} & \textbf{AMBER} \\ \midrule
\texttt{Baseline} & 84.20 & 61.94 & 74.30 & 58.94 & 87.70 \\  \cmidrule{2-6}
\rowcolor[HTML]{D9EAD3} 
\texttt{LLaVA-HD} & 85.00 & 62.48 & 72.97 & 61.48 & 88.20 \\ \rowcolor[HTML]{D9EAD3} 
\texttt{Monkey} & 83.10 & 60.70 & 73.85 & 62.14 & 86.90 \\ \cmidrule{2-6} 
\rowcolor[HTML]{CFE2F3} 
\texttt{E2E} & 82.90 & 61.90 & 74.12 & 59.69 & 86.00 \\
\rowcolor[HTML]{CFE2F3} 
\texttt{$\;\;$only Global} & 82.10 & 62.43 & 75.26 & 43.65 & 84.30 \\
\rowcolor[HTML]{CFE2F3} 
\texttt{$\;\;$only Local} & 43.40 & 38.70 & 71.24 & 58.22 & 78.20 \\ \cmidrule{2-6} \rowcolor[HTML]{CFE2F3} 
\texttt{Alternating} & 84.20 & 62.96 & 77.44 & 63.58 & 88.40 \\
\rowcolor[HTML]{CFE2F3} 
\texttt{$\;\;$only Global} & 84.10 & 62.78 & 76.01 & 59.42 & 87.30 \\
\rowcolor[HTML]{CFE2F3} 
\texttt{$\;\;$only Local} & 82.10 & 53.21 & 76.23 & 62.45 & 85.60 \\ \bottomrule
\end{tabular}%
}
    \end{minipage} 
\end{table}

\textbf{Why Different Strategies for Global and Local Treatment are Necessary?} When comparing two strategies from LLaVA-HD~\citep{liu2023improved} and Monkey~\citep{li2023monkey} with identical hyperparameters and slicing strategies, it becomes apparent why different treatment strategies for global and local features are essential. LLaVA-HD does not compress local features, in contrast, all image features are directly projected by an MLP, resulting in a maximal context size of 4096. That is, this approach significantly increases both training and inference times. Conversely, Monkey compresses all global and local image tokens using $144$ learnable query embeddings, akin to \abbr. Despite LLaVA-HD introducing more image tokens, our approach outperforms it. For instance, in the SQA dataset, \texttt{only global} of \abbr achieves commendable performance, highlighting the importance of global context in SQA tasks. However, as image features are primarily dominated by local image details, LLaVA-HD is detrimental to the SQA dataset. Conversely, for datasets like VQA$^T$, which demand more image details, LLaVA-HD consistently achieves performance gains. Notably, LLaVA-HD slightly outperforms \abbr in the POPE benchmark. This is likely because POPE questions are simplistic and focus on single objects in the image. Therefore, even with some loss of local detail information, LLaVA-HD can answer such questions more effectively. Monkey's approach, compressing all features, surpasses LLaVA-HD in SQA and VQA$^T$ by nearly $1$ point. However, it performs inferiorly in other benchmarks, emphasizing the importance of maintaining the global context without compression. In contrast, \abbr maintains all the global context and provides additional image detail with compression, yielding promising results regardless of whether the datasets prioritize global context or local details.

\textbf{Impact of Alternating Training on Performance.} In this part, we investigate the effect of alternating training on model performance. To assess the significance of Alternating Training, we initially compare the performance directly (lines 4 \& 7 in Table.~\ref{tab:ablation}), revealing a substantial performance gap between them. To further explore this phenomenon, we isolate the global and local features as image tokens respectively to assess the amount of image information provided by each. Notably, for the model that is trained end-to-end, we observe that utilizing only global features yields satisfactory results, while the local features are inadequately trained, resulting in poor performance across most benchmarks. Conversely, when employing only local features for \abbr, performance improves markedly. This improvement can be attributed to the model's dedicated learning of local feature compression, resulting in well-trained local features. Despite this improvement, utilizing only local features proves insufficient across benchmarks, underscoring the crucial of global view.

\textbf{Effect of Number of Reserve Tokens.} Additionally, we validate our hypothesis that more image tokens do not always yield superior results. For instance, when $\gamma$ is set to $75\%$, consistent performance gains are evident across most benchmarks. This indicates that by discarding irrelevant image tokens and padding tokens, the model can focus more on those most pertinent to the posed question.

\textbf{Ablation Studies.} In Table \ref{tab:ablation}, we provide a comprehensive analysis of the effectiveness of each component within \abbr. Through the utilization of various adapters for global features, we have witnessed a notable improvement in image understanding, resulting in enhanced performance across a range of benchmarks. This underscores the significance of leveraging complex adapters for better extraction of global information. The inclusion of a detailed local compression layer has proven particularly advantageous for tasks demanding intricate visual analysis, such as TextVQA. Increasing $N_q$ from $64$ to $144$ further amplifies performance. However, as the number of local feature tokens increases, performance does not consistently rise. This is attributed to the overshadowing of global context by the abundance of local tokens, as evidenced by performance dips across most benchmarks. Notably, excessive local detail can also hinder performance gains in VQA$^T$. Therefore, we adopt 144 as our default setting. Lastly, the proposed SMR data demonstrates a significant enhancement in the model's reasoning ability, especially evident in mathematical and scientific datasets, while mitigating hallucination. Importantly, rigorous checks have been conducted on all training data to ensure there are no concerns regarding data leakage between training and test instances.

\begin{wrapfigure}{r}{0.72\linewidth}
\vspace{-0.3cm}
  \begin{center}
    \includegraphics[width=\linewidth]{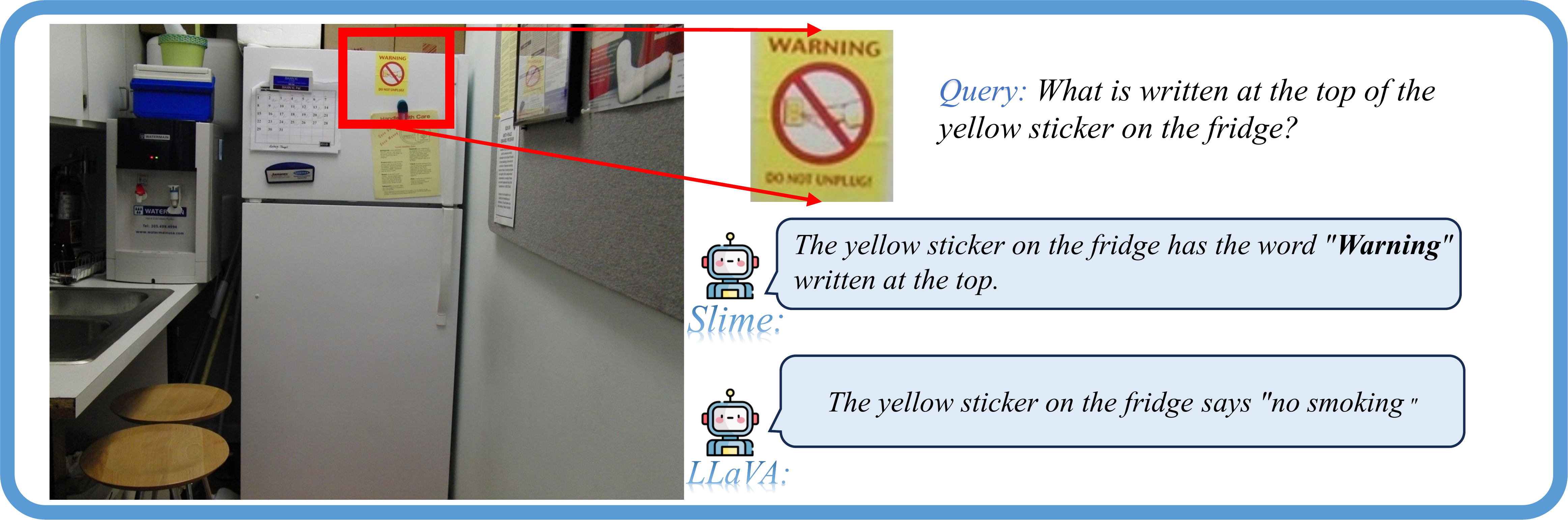}
\caption{\textbf{High-resolution image perception.}}
    \label{fig:hd_sample1}
  \end{center}
\vspace{-0.3cm}
\end{wrapfigure}

\textbf{Qualitative Results} of \abbr in high-resolution image perception are illustrated in Fig.~\ref{fig:hd_sample1}, and Figs.~\ref{fig:detail3} to \ref{fig:detail3}. These results emphasize the importance of local features to \abbr, as they enrich image details and facilitate a deeper understanding of vision information. Moreover, the ultimate \abbr demonstrates robust open-generation capabilities, including tasks such as code generation from flowcharts (Fig.~\ref{fig:detail_code}), story creation based on images (Fig.~\ref{fig:detail_story}), and providing suggestions (Fig.~\ref{fig:detail_sugg}).



\vspace{-0.2cm}
\section{Conclusion and Discussion}\label{sec:limit}
\vspace{-0.1cm}
In this paper, we delve into elucidating the intricacies of designing large multimodal models, with a specific focus on high-resolution image processing. Unlike previous studies that treat both the global view and sliced local image patches indiscriminately, our approach involves projecting and extracting global context using a mixture of experts, all without any feature compression. This methodology is rooted in the belief that global context encapsulates the majority of image information and holds greater significance than local patches. Local features undergo compression and selection based on their relevance to the query, thereby mitigating computation costs. Although training the framework end-to-end initially yields subpar performance, we address this by formulating the problem into a bi-level formulation and employing an alternating training strategy. This strategic maneuver circumvents optimization dilemmas inherent in end-to-end training. Our framework, dubbed \abbr, demonstrates promising performance across more than 10 benchmarks and even matches the performance of proprietary LMMs trained on significantly larger datasets, all with only 2 million training data points.

\textbf{Limitation and Future Work.} One main limitation lies in the 3-stage training approach. While alternating training proves superior to E2E training both theoretically and empirically, it inevitably extends the training duration. A promising avenue for improvement involves delving deeper into optimization methods tailored for such a bilinear formulation \cite{zhao2024improving}, potentially converting the alternating training strategy into a soft constraint within the gradient during E2E training. Another fruitful direction for future research is image token reduction. Given that existing studies consolidate all local and global features into LLMs, the computational cost for processing very-high-resolution images becomes prohibitively high. Therefore, an open question remains: can we further reduce image tokens, drawing inspiration from techniques such as token merging \cite{bolya2022tome} in computer vision? By doing so, we may preserve sufficient local details without necessitating additional image tokens for LLMs.

\bibliographystyle{plain}
\bibliography{neurips_2024}

\clearpage
\newpage
\appendix





\section{Related Work}\label{sec:related_work}
\textbf{Multimodal Large Language Models} have undergone significant evolution, initially rooted in BERT-based language decoders and later incorporating advancements in LLMs. Leveraging advanced LLMs such as GPTs~\citep{gpt4,brown2020language}, PaLM~\citep{chowdhery2023palm,anil2023palm}, BLOOM~\citep{muennighoff2022crosslingual}, LLaMA~\citep{touvron2023llama,touvron2023llama2}, Alpaca~\citep{taori2023stanford}, Vicuna~\citep{chiang2023vicuna}, and Mistral~\citep{jiang2023mistral}, Multimodal Large Language Models (LVLMs) exhibit enhanced capabilities and performance, particularly through end-to-end training techniques. Recent model developments, including Flamingo~\citep{awadalla2023openflamingo}, PaLI~\citep{laurenccon2024obelics}, PaLM-E~\citep{driess2023palm}, BLIP-2~\citep{li2023blip}, InstructBLIP~\citep{dai2024instructblip}, Otter~\citep{li2023otter}, MiniGPT-4~\citep{zhu2023minigpt}, mPLUG-Owl~\citep{ye2023mplug}, LLaVA~\citep{liu2023visual}, and QWen-VL~\citep{bai2023qwen}, bring unique perspectives to challenges such as scaling pre-training, enhancing instruction-following capabilities, and overcoming alignment issues. Our work is built upon LLaVA~\citep{liu2023visual}, but enhances all training datasets, model architecture, and alignment strategies, achieving state-of-the-art performance among existing LLMs.

\textbf{Advancements in Visual Instruction Tuning}: The efficacy of multimodal models heavily relies on the availability of high-quality image-text data for fine-tuning, a process known as visual instruction tuning~\citep{liu2023visual}. Previous studies have highlighted the limitations of constructing training sets solely based on existing Visual Question Answering (VQA) datasets~\citep{hudson2019gqa}, often resulting in degraded model performance. In an effort to address this, MiniGPT-4~\citep{zhu2023minigpt} meticulously curated 3,500 high-quality image-text pairs through a refinement process using ChatGPT, leading to more natural and reliable responses post-fine-tuning. In a pioneering initiative, LLaVA~\citep{liu2023visual} systematically constructed the LLaVA-Instruct-150K dataset for visual instruction tuning. Employing GPT-4, they generated questions and answers by providing image-level captions and object bounding boxes from the COCO dataset~\citep{lin2014microsoft}. To delve deeper into text-rich images, LLaVAR~\citep{zhang2023llavar} collected 422K noisy instruction-following instances using Optical Character Recognition (OCR) results and 16K high-quality instances using GPT-4. InstructBLIP~\citep{dai2024instructblip} amalgamated 26 public datasets, including LLaVA-Instruct-150K, to construct visual instruction tuning data. Innovatively, SVIT~\citep{zhao2023svit} constructed a comprehensive dataset comprising 4.2 million visual instruction tuning instances, including conversation question-answer pairs, complex reasoning QA pairs, referring QA pairs, and detailed image descriptions. However, many of these public datasets predominantly focus on visual perception and image captioning. In this study, we elevate the complexity of the Visual Instruction Tuning process by incorporating nine diverse datasets, encompassing scientific questions, mathematical/chart reasoning tasks, and even full reasoning paths. This augmentation aims to enhance LMMs to achieve significantly improved reasoning capabilities.


\section{Visual instruction tuning datasets}\label{app:datasets}

\begin{figure}
    \centering
    \includegraphics[width=\linewidth]{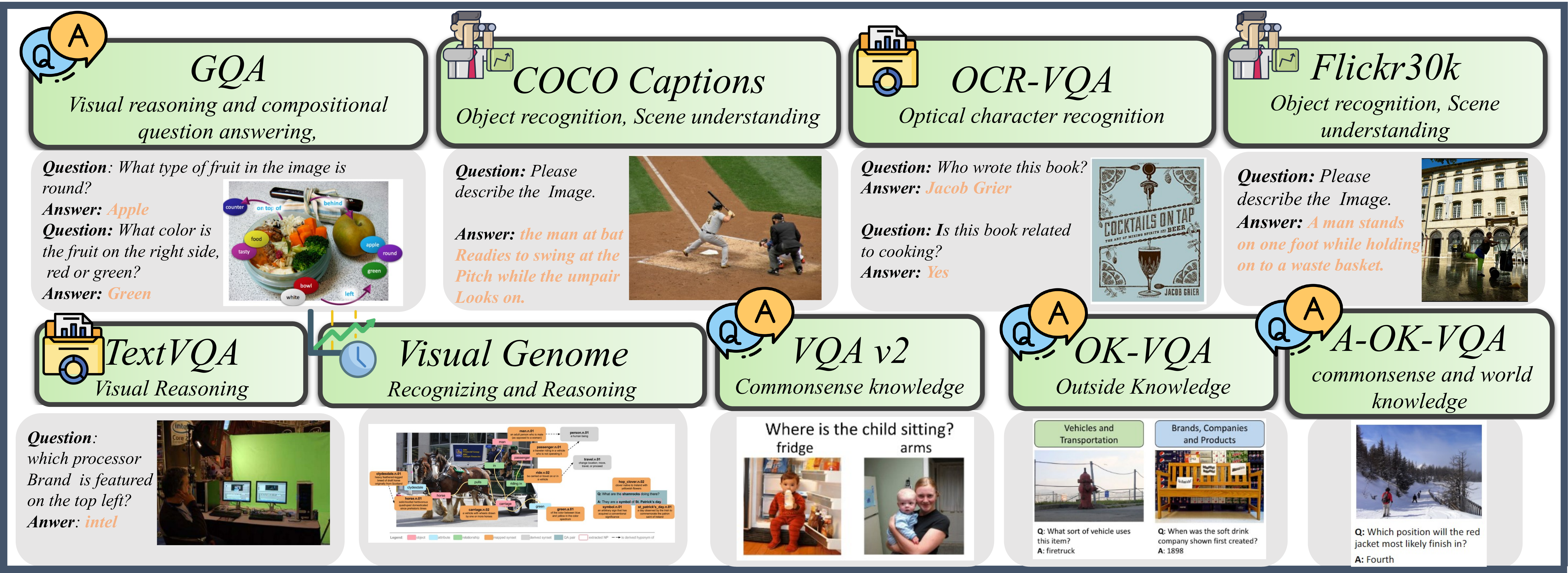}
    \caption{\textbf{Visual perception and reasoning tasks} in existing visual instruction tuning datasets.}
    \label{fig:vqa_dataset}
\end{figure}

Arxiv-QA~\citep{li2024multimodal}\footnote{\url{https://huggingface.co/datasets/MMInstruction/ArxivQA}} is a dataset featuring diverse images from scientific domains. Leveraging GPT-4V, the authors generated instruction-tuning datasets for generating QA pairs based on figures extracted from scientific papers. After filtering out invalid samples, the dataset consists of 100K QA pairs. The questions in the dataset have an average word count of 16.98, while the options for each question have an average word count of 31.86. On average, there are 4.20 options per question. For ArxivQA, we utilize rationales as answers for supervised fine-tuning.  


ScienceQA~\citep{lu2022learn}\footnote{\url{https://huggingface.co/datasets/derek-thomas/ScienceQA}} is a dataset characterized by rich domain diversity across three subjects: natural science, language science, and social science. Questions within each subject are categorized first by topic (e.g., Biology, Physics, Chemistry) and then further categorized by subtopics (e.g., Plants, Cells, Animals) and skills (e.g., Classify fruits and vegetables as plant parts, Identify countries of Africa). ScienceQA encompasses 26 topics, 127 categories, and 379 skills, providing comprehensive coverage across various domains. For ScienceQA, we utilize both the question and the instructions on how to solve the question as the prompt.

MATH-Vision~\citep{wang2024measuring}\footnote{\url{https://huggingface.co/datasets/mathvision/mathvision}} is an intricately assembled compilation of 3,040 meticulously selected mathematical problems accompanied by visual contexts sourced from authentic math competitions. Encompassing 16 distinct mathematical disciplines and graded across 5 levels of difficulty, the dataset offers a comprehensive and diverse array of challenges to assess the mathematical reasoning capabilities of Large Multimodal Models (LMMs).

AI2D~\citep{kembhavi2016diagram}\footnote{\url{https://github.com/allenai/dqa-net}} AI2 Diagrams (AI2D) is a dataset of over 5000 grade school science diagrams with over 150000 rich annotations, their ground truth syntactic parses, and more than 15000 corresponding multiple choice questions.

TextBookQA~\citep{kembhavi2017you}\footnote{\url{https://allenai.org/data/tqa}} is derived from middle school science curricula and comprises 1,076 lessons extracted from Life Science, Earth Science, and Physical Science textbooks. The dataset encompasses a total of 26,260 questions, with 12,567 of them accompanied by diagrams. We leverage the instructional content of images with the prompt that chosen from \cref{tab:image_commands} and consider questions with accompanying diagrams necessary for answering as VQA tasks.

GeoQA3~\citep{chen2021geoqa}\footnote{\url{https://github.com/chen-judge/GeoQA}} is a dataset comprising 4,998 diverse real-world geometric problems sourced from Chinese middle school exams. Each problem is further annotated with specific programs that describe the problem-solving process. The dataset encompasses three main problem types: angle calculation, length calculation, and others, which include various types of problems such as area calculation.

The Geometry3K Dataset\cite{lu2021inter}\footnote{\url{https://github.com/lupantech/InterGPS}} comprises 3,002 SAT-style problems sourced from two high-school textbooks covering diverse graph and goal types. Additionally, each problem in Geometry3K is annotated with dense descriptions in formal language.

Tabular Math Word Problems (TabMWP)\cite{lu2022dynamic}\footnote{\url{https://github.com/lupantech/PromptPG}} introduces a novel dataset containing 38,431 open-domain grade-level problems requiring mathematical reasoning on both textual and tabular data. Each question in TabMWP is associated with a tabular context, presented in the form of an image, semi-structured text, and a structured table. Questions are categorized into free-text and multi-choice types, with each problem annotated with gold solutions elucidating the multi-step reasoning process.

Data Visualization Question Answering (DVQA)\cite{kafle2018dvqa}\footnote{\url{https://github.com/kushalkafle/DVQA_dataset}} is a dataset designed to assess various aspects of bar chart understanding in a question answering framework. Unlike Visual Question Answering (VQA), DVQA necessitates processing words and answers unique to a particular bar chart. DVQA facilitates the automatic querying of extensive repositories of charts within scientific documents, web pages, and business reports. We randomly selected 100K data from the reasoning splits for our experiments.

ChartVQA\cite{masry2022chartqa}\footnote{\url{https://github.com/vis-nlp/ChartQA}} serves as a benchmark for question answering about charts with visual and logical reasoning. We exclusively selected human-authored QA pairs and excluded all machine-generated question-answer pairs to ensure data quality, resulting in a total of 9.6K question-answer pairs.

\begin{table}[ht]
  \centering
  \caption{\textbf{The list of instructions for detailed image description.}}
  \label{tab:image_commands}
  \begin{tabular}{cp{10cm}}
    \toprule \rowcolor{sclgreyblue!25}
    \textbf{Index} & \textbf{Description} \\ 
    \hline
    1 & Describe the following image in detail \\
    2 & Provide a detailed description of the given image \\
    3 & Give an elaborate explanation of the image you see \\
    4 & Share a comprehensive rundown of the presented image \\
    5 & Offer a thorough analysis of the image \\
    6 & Explain the various aspects of the image before you \\
    7 & Clarify the contents of the displayed image with great detail \\
    8 & Characterize the image using a well-detailed description \\
    9 & Break down the elements of the image in a detailed manner \\
    10 & Walk through the important details of the image \\
    11 & Portray the image with a rich, descriptive narrative \\
    12 & Narrate the contents of the image with precision \\
    13 & Analyze the image in a comprehensive and detailed manner \\
    14 & Illustrate the image through a descriptive explanation \\
    15 & Examine the image closely and share its details \\
    16 & Write an exhaustive depiction of the given image \\
    \bottomrule
  \end{tabular}
\end{table}

\section{Proof of Theoretical Statements}\label{sec:app_proof}

\subsection{Gradient Descent for Optimizing Bilinear Problem}
Let $u_0$ and $v_0$ be the initial solution and is given in the following form
\begin{eqnarray}
u_0 = \alpha_0 a + \beta_0 b, \; v_0 = \beta_0 a + \alpha_0 b
\label{eqn:init}
\end{eqnarray}
where $\alpha_0, \beta_0 \in \R$ are two scales. Here, we use the fact that $u$ and $v$ have to lie in the subspace spanned by $a$ and $b$. Using the gradient descent method, we update the solution $u_t$ and $v_t$ as
\[
u_{t+1} = u_t - \eta\left(u_t v_t^{\top} - X\right) v_t, \quad v_{t+1} = v_t - \eta\left(v_t u_t^{\top} - X\right)u_t
\]
The following theorem describes the dynamics of $u$ and $v$ over the iterations of gradient descent. 
\begin{thm}
Define $z_t = (\alpha_t, \beta_t)^{\top}$. Under the initialization of $u$ and $v$ given in Eq.~\ref{eqn:init}, we have, for all $t \geq 0$
\[
u_t = \alpha_t a + \beta_t b, \; v_0 = \beta_t a + \alpha_t b
\]
with $|u_t|^2 = |v_t|^2$, where
\begin{eqnarray}
z_{t+1} = \underbrace{\left(\begin{array}{cc}
1 + \eta(1 - |u_t|^2) & \eta a^{\top}b \\
\eta a^{\top}b & 1 + \eta(1 - |u_t|^2)
\end{array}\right)}_{:= F_t}z_t \label{eqn:update}
\end{eqnarray}
\end{thm}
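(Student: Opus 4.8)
The plan is to prove the claim by induction on $t$, tracking simultaneously three facts: (i) $u_t = \alpha_t a + \beta_t b$ and $v_t = \beta_t a + \alpha_t b$ for scalars $\alpha_t, \beta_t$; (ii) $|u_t|^2 = |v_t|^2$; and (iii) the recursion $z_{t+1} = F_t z_t$. The base case $t = 0$ is exactly the initialization in Eq.~\eqref{eqn:init}, and $|u_0|^2 = |v_0|^2$ follows since both equal $\alpha_0^2 + \beta_0^2 + 2\alpha_0\beta_0\, a^\top b$ (using $|a| = |b| = 1$). So everything reduces to the inductive step.

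First I would expand the gradient descent update for $u$. Write $u_t v_t^\top = (\alpha_t a + \beta_t b)(\beta_t a + \alpha_t b)^\top$, and note the key algebraic identity that $v_t^\top v_t = |v_t|^2$ while, crucially, the symmetric structure of $X = ab^\top + ba^\top$ interacts cleanly with $v_t$: one computes $X v_t = (ab^\top + ba^\top)(\beta_t a + \alpha_t b) = (\beta_t (a^\top b) + \alpha_t) a + (\beta_t + \alpha_t (a^\top b)) b$. Then $u_{t+1} = u_t - \eta(u_t (v_t^\top v_t) - X v_t) = u_t - \eta |v_t|^2 u_t + \eta X v_t$. Substituting $u_t = \alpha_t a + \beta_t b$ and collecting the coefficients of $a$ and $b$ gives $\alpha_{t+1} = (1 - \eta|v_t|^2)\alpha_t + \eta(\beta_t (a^\top b) + \alpha_t)$ and $\beta_{t+1} = (1 - \eta|v_t|^2)\beta_t + \eta(\beta_t + \alpha_t (a^\top b))$. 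Using the inductive hypothesis $|v_t|^2 = |u_t|^2$, this rearranges exactly to the first row/second row of $F_t z_t$ as claimed, since $1 - \eta|v_t|^2 + \eta = 1 + \eta(1 - |u_t|^2)$.

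Then I would do the same for $v$: by the identical computation (note $X$ is symmetric, so $X u_t$ has the analogous form), $v_{t+1} = (1 - \eta|u_t|^2) v_t + \eta X u_t$, and because $u_t$ and $v_t$ are related by swapping $\alpha_t \leftrightarrow \beta_t$, the resulting coefficients of $a$ and $b$ in $v_{t+1}$ are precisely $\beta_{t+1}$ and $\alpha_{t+1}$ — i.e. $v_{t+1} = \beta_{t+1} a + \alpha_{t+1} b$, preserving the structural form. Finally, since $F_t$ is a symmetric matrix and $v_{t+1}$ is obtained from $u_{t+1}$ by the $\alpha \leftrightarrow \beta$ swap, a short computation shows $|u_{t+1}|^2 = |v_{t+1}|^2$ (both are quadratic forms in $(\alpha_{t+1}, \beta_{t+1})$ with the same symmetric Gram matrix $\begin{psmallmatrix} 1 & a^\top b \\ a^\top b & 1 \end{psmallmatrix}$, which is symmetric under coordinate swap), closing the induction.

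The main obstacle is bookkeeping rather than conceptual: one must be careful that the "swap symmetry" between $u_t$ and $v_t$ is genuinely preserved at every step, which hinges on $X$ being symmetric and on the initialization respecting the swap. The one subtlety worth stating explicitly is the step $|u_t|^2 = |v_t|^2$, which is what lets the two diagonal entries of $F_t$ coincide and makes the update a clean symmetric-matrix multiplication; without it the recursion would not close in the stated form. I would present the $a$-$b$ coefficient extraction as the single explicit calculation and leave the mirror-image computation for $v$ as "by symmetry."
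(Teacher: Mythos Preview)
Your proposal is correct and follows essentially the same inductive argument as the paper: expand $u_{t+1} = (1-\eta|v_t|^2)u_t + \eta X v_t$, compute $Xv_t$ explicitly in the $a,b$ basis, read off the coefficients to obtain $z_{t+1}=F_t z_t$, and repeat the mirror calculation for $v_{t+1}$ to verify the swap structure persists. If anything, you are slightly more thorough than the paper in explicitly closing the induction on $|u_{t+1}|^2=|v_{t+1}|^2$ via the Gram-matrix symmetry, whereas the paper leaves this implicit once $v_{t+1}=\beta_{t+1}a+\alpha_{t+1}b$ is established.
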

\begin{proof}
We prove the result by induction. It is easy to see that the theorem hold for $t=0$. We will then assume it is true for the case of $t$ and show it also holds for $t+1$. We first have $u_{t+1}$ as
\begin{eqnarray*}
u_{t+1} & = & u_t - \eta\left(u_tv_t^{\top} - X\right)v_0 \\
& = & u_t\left(1 - \eta |v_t|^2\right) + \eta Xv_t \\
& = & u_t\left(1 - \eta |v_t|^2\right) + \eta \left(ab^{\top} + ba^{\top}\right)(\beta_t a + \alpha_t b) \\
& = & u_t\left(1 - \eta |u_t|^2\right) + \eta\left(\beta_t b^{\top}a a + \beta_t b + \alpha_t a + \alpha_ta^{\top}b b\right) \\
& = & \left((1 - \eta |u_t|^2 + \eta)\alpha_t + \eta a^{\top}b\beta_t\right)a + \left((1 - \eta|u_t|^2 + \eta)\beta_t + a^{\top}b \alpha_t\right)b
\end{eqnarray*}
where we utilize the assumption $|v_t| = |u_t|$. 
Hence, by writing $u_{t+1} = \alpha_{t+1} a + \beta_{t+1} b$, we have
\[
\left(\begin{array}{c}
\alpha_{t+1} \\
\beta_{t+1}
\end{array}
\right) = \left(\begin{array}{cc}
1 + \eta(1 - |u_t|^2) & \eta a^{\top}b \\
\eta a^{\top}b & 1 + \eta(1 - |u_t|^2)
\end{array}\right)\left(\begin{array}{c}
\alpha_t \\
\beta_t
\end{array}
\right)
\]
Similarly, we apply gradient descent to update $v_t$ as follows
\begin{eqnarray*}
v_{t+1} & = & v_t - \eta\left(v_t u_t^{\top} - X\right)u_t \\
& = & v_t\left(1 - \eta |u_t|^2\right) + \eta Xu_t \\
& = & v_t\left(1 - \eta |u_t|^2\right) + \eta \left(ab^{\top} + ba^{\top}\right)(\alpha_t a + \beta_t b) \\
& = & v_t\left(1 - \eta |v_t|^2\right) + \eta\left(b^{\top}a \alpha_t a + \alpha_t b + \beta_t a + a^{\top}b\beta_t b\right) \\
& = & \left((1 - \eta |v_t|^2 + \eta)\beta_t + \eta a^{\top}b\alpha_t\right)a + \left((1 - \eta|v_t|^2 + \eta)\alpha_t + a^{\top}b \beta_t\right)b
\end{eqnarray*}
Thus, it is easy to see that $v_{t+1} = \beta_{t+1} a + \alpha_{t+1} b$. 
\end{proof}

Next, we will show that the updating in Eq.~\ref{eqn:update} will not converge the largest eigenvector of $M$. Let $\lambda_+ > \lambda_-$ denote the largest and smallest eigenvalues of $M$, and let $w_+$ and $w_-$ denote the corresponding eigenvectors, respectively. We have
\[
\lambda_+ = 1 + a^{\top}b, \quad \lambda_- = 1 - a^{\top}b
\]
and
\[
w_+ = \frac{1}{\sqrt{2}}\left(
\begin{array}{c}
1 \\
1
\end{array}
\right), \quad w_- = \frac{1}{\sqrt{2}}\left(
\begin{array}{c}
1 \\
-1
\end{array}
\right)
\]
Let $\tau_t = z_t^{\top} w_+$ and $\nu_t = z_t^{\top}w_-$. Using these notation, we can write $|u_t|^2$ as
\begin{eqnarray}
|u_t|^2 = z_t^{\top}M z_t = \left(1 + a^{\top}b\right)\tau_t^2 + \left(1 - a^{\top}b\right)\nu_t^2
\label{eqn:norm}
\end{eqnarray}
The following lemma describes how $\tau_t$ and $\nu_t$ evolves over iterations.
\begin{lemma} We have
\begin{eqnarray}
\tau_{t+1} = \left(1 + \eta(1 + a^{\top}b - |u_t|^2)\right)\tau_t, \quad \nu_{t+1} = \left(1 + \eta(1 - a^{\top}b - |u_t|^2)\right)\nu_t  \label{eqn:update-1}
\end{eqnarray}
\end{lemma}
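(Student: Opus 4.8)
The plan is to prove the lemma by direct substitution: take the update rule for $z_t$ from Eq.~(\ref{eqn:update}), namely $z_{t+1} = F_t z_t$ with $F_t = (1+\eta(1-|u_t|^2))I + \eta (a^{\top}b)\,\sigma_x$ where $\sigma_x = \bigl(\begin{smallmatrix}0&1\\1&0\end{smallmatrix}\bigr)$, and then project onto the two eigenvectors $w_+$ and $w_-$ of $M$. The key observation is that $w_+$ and $w_-$ are also eigenvectors of $\sigma_x$ (indeed $\sigma_x w_+ = w_+$ and $\sigma_x w_- = -w_-$), hence they are eigenvectors of $F_t$ as well: $F_t w_+ = (1+\eta(1-|u_t|^2) + \eta a^{\top}b) w_+$ and $F_t w_- = (1+\eta(1-|u_t|^2) - \eta a^{\top}b) w_-$.

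First I would recall that $\{w_+, w_-\}$ form an orthonormal basis of $\R^2$, so $z_t = \tau_t w_+ + \nu_t w_-$ with $\tau_t = z_t^{\top} w_+$ and $\nu_t = z_t^{\top} w_-$. Applying $F_t$ and using that $F_t$ is symmetric with the eigen-structure above, I get $z_{t+1} = F_t z_t = \tau_t F_t w_+ + \nu_t F_t w_- = \bigl(1+\eta(1 + a^{\top}b - |u_t|^2)\bigr)\tau_t\, w_+ + \bigl(1+\eta(1 - a^{\top}b - |u_t|^2)\bigr)\nu_t\, w_-$. Reading off the coefficients in the $\{w_+, w_-\}$ basis then gives exactly Eq.~(\ref{eqn:update-1}). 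One small check along the way: $|u_t|^2$ is a well-defined scalar (shown in Eq.~(\ref{eqn:norm}) to equal $(1+a^{\top}b)\tau_t^2 + (1-a^{\top}b)\nu_t^2$), so $F_t$ genuinely acts as a fixed linear map at step $t$ and the projection argument is valid; the dependence of $F_t$ on $z_t$ only makes the recursion nonlinear, not the per-step algebra.

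There is essentially no obstacle here — the content is just diagonalizing a $2\times 2$ symmetric matrix in a basis where it is already diagonal. The only thing to be careful about is sign bookkeeping in the off-diagonal term $\eta a^{\top}b$ and making sure the eigenvalue formulas $\lambda_\pm = 1 \pm a^{\top}b$ are used consistently with the stated $w_\pm$. If anything, the mild subtlety is conceptual rather than technical: the factor $1 + \eta(1 \pm a^{\top}b - |u_t|^2)$ is what will later be shown to be the wrong growth rate (the ratio $\tau_{t+1}/\nu_{t+1}$ does not tend to align $z_t$ with $w_+$ unless the step size is tuned perfectly, and even then only marginally), which is the point of Theorem~\ref{them:sim}. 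So I would present this lemma purely as the bookkeeping step that isolates the two scalar recursions, deferring the non-convergence conclusion to the subsequent analysis.
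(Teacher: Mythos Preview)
Your proposal is correct and follows essentially the same approach as the paper: the paper writes $F_t = (1-\eta|u_t|^2)I + \eta M$ and uses the spectral decomposition $F_t = (1-\eta|u_t|^2+\eta\lambda_+)w_+w_+^{\top} + (1-\eta|u_t|^2+\eta\lambda_-)w_-w_-^{\top}$, which is exactly your observation that $w_\pm$ are eigenvectors of $F_t$ (your $\sigma_x$ is just $M - I$, so the decompositions coincide). Projecting $z_t = \tau_t w_+ + \nu_t w_-$ through $F_t$ then yields Eq.~(\ref{eqn:update-1}) in both arguments.
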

\begin{proof}
According to Eq.~\ref{eqn:update}, we can write 
\[
F_t = (1 - \eta|u_t|^2) I + \eta M
\]
and therefore can decompose it into the following form
\[
F_t = (1 - \eta|u_t|^2 + \eta \lambda_+)w_+w_+^{\top} + (1 - \eta|u_t|^2 + \eta \lambda_-)w_-w_-^{\top}
\]
Since $z_t = \tau_+w_+ + \nu_t w_-$, we have
\[
F_t z_t = (1 - \eta|u_t|^2 + \eta \lambda_+)\tau_tw_+ + (1 - \eta|u_t|^2 + \eta \lambda_-)\nu_tw_-
\]
and therefore
\[
\tau_{t+1} = \left(1 + \eta(1 + a^{\top}b - |u_t|^2)\right)\tau_t, \quad \nu_{t+1} = \left(1 + \eta(1 - a^{\top}b - |u_t|^2)\right)\nu_t
\]
\end{proof}
When $\eta$ is very small, we can write the updating equation in Eq.~\ref{eqn:update-1} as differential equations, i.e.
\begin{eqnarray}
\frac{d \tau_t}{d t} = \left(1 + a^{\top}b - |u_t|^2\right) \tau_t, \quad \frac{d \nu_t}{d t} = \left(1 - a^{\top}b - |u_t|^2\right) \nu_t \label{eqn:diff}
\end{eqnarray}
The following reveals the convergence property of Eq.~\ref{eqn:diff}.

\begin{thm}
ODE in Eq.~\ref{eqn:diff} have two converged solutions, with one being $\tau_t \rightarrow 1, \nu_t \rightarrow 0$ and the other being $\tau_t \rightarrow 0, \nu_t \rightarrow 1$ when $t \rightarrow \infty$
\label{theo:4}
\end{thm}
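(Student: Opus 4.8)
The plan is to study the two-dimensional autonomous ODE system in Eq.~\ref{eqn:diff} together with the algebraic constraint $|u_t|^2 = (1+a^{\top}b)\tau_t^2 + (1-a^{\top}b)\nu_t^2$ from Eq.~\ref{eqn:norm}, and to show that the only equilibria compatible with this constraint are the two claimed points. First I would substitute the expression for $|u_t|^2$ into the right-hand sides, obtaining a closed self-contained system in $(\tau_t,\nu_t)$ alone; then equilibria are solutions of $\tau(1+a^{\top}b - |u|^2)=0$ and $\nu(1-a^{\top}b-|u|^2)=0$. Reading off the cases: if $\tau=\nu=0$ the norm is $0$, but then the first bracket equals $1+a^{\top}b\neq 0$ (assuming $a^{\top}b \neq \pm 1$, i.e. $a,b$ not parallel), contradiction, so the origin is not an equilibrium of interest; if $\tau\neq 0,\nu=0$ we need $|u|^2 = 1+a^{\top}b$, i.e. $(1+a^{\top}b)\tau^2 = 1+a^{\top}b$, giving $\tau^2=1$; symmetrically $\tau=0,\nu\neq 0$ forces $\nu^2=1$; and $\tau\neq0,\nu\neq0$ would require $|u|^2$ to equal both $1+a^{\top}b$ and $1-a^{\top}b$, impossible. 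This isolates exactly the two equilibria $(\tau,\nu)=(\pm 1,0)$ and $(0,\pm 1)$, which up to the irrelevant overall sign are the two asserted limits.

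Having classified the fixed points, the remaining task is to show these are the actual limits of trajectories as $t\to\infty$, i.e. that the flow converges rather than, say, diverging or orbiting. Here I would introduce the Lyapunov-type quantity $r_t := |u_t|^2 = (1+a^{\top}b)\tau_t^2 + (1-a^{\top}b)\nu_t^2$ and compute $\dot r_t = 2(1+a^{\top}b)\tau\dot\tau + 2(1-a^{\top}b)\nu\dot\nu = 2(1+a^{\top}b)\tau^2(1+a^{\top}b - r) + 2(1-a^{\top}b)\nu^2(1-a^{\top}b-r)$. This shows $r$ is driven toward the convex combination $\frac{(1+a^{\top}b)^2\tau^2 + (1-a^{\top}b)^2\nu^2}{(1+a^{\top}b)\tau^2+(1-a^{\top}b)\nu^2}$, and in particular $r_t$ is bounded and monotone once it enters a neighborhood of the equilibrium manifold; combined with the fact that $\tau$ and $\nu$ never change sign (each obeys a linear equation $\dot\tau = c(t)\tau$), every trajectory starting with $\tau_0\neq 0$ or $\nu_0\neq 0$ is confined to a quadrant and must converge to one of the four listed points. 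A cleaner route is to track the ratio $\rho_t := \nu_t^2/\tau_t^2$ (valid when $\tau_0\neq 0$): from Eq.~\ref{eqn:update-1} or Eq.~\ref{eqn:diff} one gets $\dot\rho_t = \rho_t\big[(1-a^{\top}b - r_t) - (1+a^{\top}b - r_t)\big]\cdot 2 = -4 a^{\top}b\,\rho_t$ (up to the precise constant), so $\rho_t \to 0$ exponentially when $a^{\top}b>0$, forcing the trajectory to the $\tau$-axis; a symmetric argument with $\tau_t^2/\nu_t^2$ handles $\tau_0=0$. Then the norm equation pins down $\tau^2\to 1$ (resp.\ $\nu^2\to1$), completing the proof.

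The main obstacle I anticipate is not the equilibrium bookkeeping but the rigorous convergence argument: Eq.~\ref{eqn:diff} is a nonlinear coupled system, and one must rule out trajectories escaping to infinity or failing to settle. The ratio trick above is the key simplification, because it decouples the ``direction'' dynamics (which axis we end up on) from the ``magnitude'' dynamics (the value of $|u|^2$), reducing the argument to two one-dimensional monotone ODEs; the cost is handling the measure-zero boundary case $a^{\top}b=0$ (where $M$ has a repeated eigenvalue and the ``largest eigenvector'' is ill-defined) and the non-generic initial conditions $\tau_0=\nu_0=0$, which I would simply exclude as degenerate. Since the point of Theorem~\ref{theo:4} is the negative result — gradient descent admits a convergent solution $\tau_t\to 0,\nu_t\to 1$ that is aligned with the \emph{smallest} eigenvector $w_-$ of $M$ rather than the largest — I would close by noting that the second equilibrium has a nonempty basin of attraction (any $\tau_0=0$, or in fact any initialization with $\nu_0^2/\tau_0^2$ large enough relative to the transient), which is exactly what defeats optimality and motivates the alternating scheme of Theorem~\ref{them:alter}.
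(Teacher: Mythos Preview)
Your equilibrium classification reaches the same two fixed points as the paper but by a more direct route: you set $\dot\tau=\dot\nu=0$ in Eq.~\ref{eqn:diff} and read off the cases using the norm formula Eq.~\ref{eqn:norm}, whereas the paper first differentiates $|u_t|^2$, derives the auxiliary stationarity condition Eq.~\ref{eqn:cond}, and then combines it with the fixed-point equations of Eq.~\ref{eqn:diff}. That detour is redundant, since Eq.~\ref{eqn:cond} is automatically implied once $\dot\tau=\dot\nu=0$; your case split is the cleaner argument. You also go beyond the paper by addressing actual convergence of trajectories via the ratio $\rho_t=\nu_t^2/\tau_t^2$; the paper's proof only identifies the equilibria and stops.

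Two correctable slips. First, the origin $(\tau,\nu)=(0,0)$ \emph{is} an equilibrium of Eq.~\ref{eqn:diff}: $\dot\tau=0\cdot(1+a^\top b)=0$ trivially, so there is no contradiction. The right reason to discard it is that its linearization has eigenvalues $1\pm a^\top b>0$, making it a repelling node to which no nontrivial trajectory converges. Second, your own ratio computation gives $\dot\rho_t=-4(a^\top b)\,\rho_t$, hence $\rho_t=\rho_0\,e^{-4(a^\top b)t}$; in the regime the paper works in ($a^\top b>0$, so that $\lambda_+>\lambda_-$) this forces $\rho_t\to 0$ for \emph{every} initialization with $\tau_0\neq 0$, regardless of how large $\nu_0^2/\tau_0^2$ is. The basin of the suboptimal equilibrium $(0,\pm 1)$ is therefore exactly the invariant line $\{\tau_0=0\}$, not a positive-measure region as your closing remark suggests. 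This sharpens the statement: at the ODE level the bad limit is non-generic, and ``does not necessarily converge'' should be read as existence of special initializations rather than a robust failure mode.
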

\begin{proof}
Using the expression of $|u_t|^2$ in Eq.~\ref{eqn:norm}, we have
\begin{eqnarray*}
\frac{d|u_t|^2}{d t} & = & 2\left(1 + a^{\top}b\right)\tau_t\frac{d\tau_t}{d t} + 2\left(1 - a^{\top}b\right)\nu_t \frac{d \nu_t}{d t} \\
& = & 2\left(1 + a^{\top}b\right)\left(1 + a^{\top}b - |u_t|^2\right)\tau_t^2 + 2\left(1 - a^{\top}b\right)\left(1 - a^{\top}b - |u_t|^2\right)\nu_t^2 \\
& = & 2(1 - |u_t|^2) |u_t|^2 + 2a^{\top}b\left((1 + a^{\top}b)\tau_t^2 - (1 - a^{\top}b)\nu_t^2\right)
\end{eqnarray*}
When the ODE converges to a fix point $u_*$, we should have
\begin{eqnarray}
2(1 - |u_*|^2) |u_*|^2 + 2a^{\top}b\left((1 + a^{\top}b)\tau_*^2 - (1 - a^{\top}b)\nu_*^2\right) = 0 \label{eqn:cond}
\end{eqnarray}
At the same time, according to Eq.~\ref{eqn:diff}, when ODE converges, we should have either 
\[
|u_*|^2 = 1 + a^{\top}b, \quad v_* =0
\] or 
\[
|u_*|^2 = 1 - a^{\top}b, \quad \tau_* = 0.
\]
We complete the proof by combining these two conditions with the condition in Eq.~\ref{eqn:cond}. 
\end{proof}
As indicated by Theorem~\ref{theo:4}, the iterative updating by gradient descent does not necessarily converge to the optimal solution, i.e., $\tau_* = 1$ and $\nu_* = 0$, rendering the simultaneous updating of $u$ and $v$ less ideal for optimizing the objective function of the bilinear form.

\subsection{Optimizing Bilinear Problem by Alternating Optimization}
We will demonstrate that the issue with gradient descent (or more precisely, simultaneously updating $u$ and $v$) can be effectively addressed through alternating optimization. Specifically, we will optimize $v$ with $u$ fixed, and then optimize $u$ with $v$ fixed. We will illustrate that this approach leads to convergence towards the optimal solution. Let $u_0 = \alpha_0 a + \beta_0 b$. We denote the sequential solution $u_t$ obtained through alternating optimization as $u_t = \alpha_t a + \beta_t b$. The following theorem describes the evolution of $z_t = (\alpha_t, \beta_t)^{\top}$ over iterations.

\begin{thm}
We have
\[
z_{t+1} = \frac{1}{|u_t|^2|v_t|^2}M^2 z_t
\]
\label{theo:5}
\end{thm}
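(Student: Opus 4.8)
The plan is to track one step of alternating optimization in closed form, using the fact (established in the gradient-descent analysis) that the iterates stay in the two-dimensional subspace spanned by $a$ and $b$. Concretely, suppose the current iterate is $u_t = \alpha_t a + \beta_t b = A z_t$, where $A = (a,b)$ and $z_t = (\alpha_t, \beta_t)^\top$. The first half-step optimizes $v$ with $u = u_t$ fixed: since $\L(u_t, v) = \tfrac12 |u_t v^\top - X|^2$ is a convex quadratic in $v$, its minimizer is the exact least-squares solution, and setting the gradient $(v u_t^\top - X) u_t = 0$ to zero gives $v_{t+1} = \dfrac{X u_t}{|u_t|^2}$. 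The second half-step symmetrically optimizes $u$ with $v = v_{t+1}$ fixed, yielding $u_{t+1} = \dfrac{X v_{t+1}}{|v_{t+1}|^2} = \dfrac{X X u_t}{|v_{t+1}|^2 |u_t|^2}$, where I have used $X^\top = X$ (recall $X = ab^\top + ba^\top$ is symmetric). So in one full alternating sweep, $u_{t+1} = \dfrac{X^2 u_t}{|u_t|^2 |v_{t+1}|^2}$.

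Next I would push this through the $A$-coordinates. From the definition $X = A \bigl(\begin{smallmatrix} 0 & 1 \\ 1 & 0\end{smallmatrix}\bigr) A^\top$ — or more directly from the excerpt's computation $X X^\top = A M A^\top$ with $M = \bigl(\begin{smallmatrix} 1 & a^\top b \\ a^\top b & 1\end{smallmatrix}\bigr)$ and $X = X^\top$ — one gets $X^2 = A M A^\top$ only after checking that $A^\top A$ interacts correctly; more carefully, $X^2 u_t = X^2 A z_t$, and expanding $X = A J A^\top$ with $J = \bigl(\begin{smallmatrix} 0 & 1 \\ 1 & 0\end{smallmatrix}\bigr)$ gives $X^2 = A J (A^\top A) J A^\top$. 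The key identity to verify is that $J (A^\top A) J = M$ up to the right bookkeeping, equivalently $X^2 A = A M^2$ since $A^\top A = \bigl(\begin{smallmatrix} 1 & a^\top b \\ a^\top b & 1\end{smallmatrix}\bigr) = M$ (using that $a,b$ are normalized), and $J M J = M$. Then $X^2 A z_t = A M^2 z_t$, so $u_{t+1} = A z_{t+1}$ with $z_{t+1} = \dfrac{1}{|u_t|^2 |v_{t+1}|^2} M^2 z_t$. Finally I would identify $|v_{t+1}|^2$ with $|v_t|^2$ in the statement: since the theorem states the recursion with $|v_t|^2$, I would note that after the $v$-update the relevant norm is exactly $|v_{t+1}|^2$, and the indexing in the theorem statement refers to the post-update $v$; alternatively, as in Theorem in the gradient-descent section, one shows $|u_t| = |v_t|$ is maintained along the alternating trajectory, so the scalar prefactor can be written symmetrically.

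The main obstacle I anticipate is the linear-algebra bookkeeping in the middle step: one has to be careful that $A$ is $d \times 2$ and not invertible, so $X^2 = A M^2 A^\top$ cannot be read off by naive cancellation — it must be justified through $A^\top A = M$ (which uses $|a| = |b| = 1$) together with $X A = A J A^\top A = A J M$ and then iterating. Getting the order of $J$, $M$, and the transposes right, and confirming $JMJ = M$ so that the two sweeps compose to $M^2$ rather than some other product, is the delicate part; everything else (the two least-squares minimizations, the symmetry of $X$) is routine. Once $z_{t+1} = \tfrac{1}{|u_t|^2|v_t|^2} M^2 z_t$ is in hand, the stated corollary $z_t \propto M^{2t} z_0$ and convergence to the top eigenvector of $M$ follows immediately from the power-iteration principle, since the scalar prefactors only rescale and $M$ is positive definite with $\lambda_+ = 1 + a^\top b > \lambda_- = 1 - a^\top b > 0$ (assuming $a \neq \pm b$).
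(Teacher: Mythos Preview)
Your proposal is correct and follows the same underlying strategy as the paper: compute the exact least-squares minimizer in each half-step ($v_t = Xu_t/|u_t|^2$, then $u_{t+1} = Xv_t/|v_t|^2$) and track the coefficients in the span of $a,b$. The paper does the bookkeeping by expanding $Xu_t$ and $Xv_t$ directly in $a,b$ coordinates and reading off the action of $M$; you instead package it as $X = AJA^\top$, $A^\top A = M$, and $JMJ = M$, from which $XA = AJM$ and hence $X^2A = A(JMJ)M = AM^2$. Both routes are equivalent and equally short; yours is a bit more structural, the paper's a bit more explicit.

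Two small points. First, the indexing worry is purely notational: in the paper the symbol $v_t$ \emph{is} the minimizer given $u_t$ (your $v_{t+1}$), so the $|v_t|^2$ in the statement already denotes the post-update norm --- no further argument is needed. Second, your ``alternative'' justification that $|u_t| = |v_t|$ is maintained along the alternating trajectory is not correct: that symmetry is specific to the simultaneous gradient-descent updates under the mirrored initialization $u_0 = \alpha_0 a + \beta_0 b$, $v_0 = \beta_0 a + \alpha_0 b$, and does not generally persist under alternating optimization (here $|v_t|^2 = z_t^\top M^3 z_t / (z_t^\top M z_t)^2$, which need not equal $|u_t|^2 = z_t^\top M z_t$). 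Drop that aside and rely on the indexing explanation alone.
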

\begin{proof}
Let $u_t = \alpha_t a + \beta_t b$. By fixing $u_t$, we have the optimal solution for $v_t$
\begin{eqnarray*}
v_t & = & \frac{1}{|u_t|^2} Xu_t \\
& = & \frac{1}{|u_t|^2}\left(a^{\top}b \alpha_t a + \alpha_t b + \beta_t a + a^{\top}b \beta_t b\right)
\end{eqnarray*}
By writing $v_t = \beta_t' a + \alpha_t' b$, we have
\[
\left(\begin{array}{c}
\alpha_t' \\
\beta_t'
\end{array}
\right) = \frac{1}{|u_t|^2}\left(\begin{array}{cc}
1 & a^{\top}b \\
a^{\top}b & 1
\end{array}\right)\left(\begin{array}{c}
\alpha_t \\
\beta_t
\end{array}
\right)
\]
We then fix $v_t$ and find the optimal solution for $u_{t+1}$, i.e.
\begin{eqnarray*}
u_{t+1} & = & \frac{1}{|v_t|^2} Xv_t \\
& = & \frac{1}{|v_t|^2}\left(a^{\top}b \beta_t' a + \beta_t' b + \alpha_t' a + a^{\top}b \alpha_t' b\right)
\end{eqnarray*}
By writing $u_{t+1} = \alpha_{t+1} a + \beta_{t+1} b$, we have
\[
\left(\begin{array}{c}
\alpha_{t+1} \\
\beta_{t+1}
\end{array}
\right) = \frac{1}{|v_t|^2}\left(\begin{array}{cc}
1 & a^{\top}b \\
a^{\top}b & 1
\end{array}\right)\left(\begin{array}{c}
\alpha_t' \\
\beta_t'
\end{array}
\right) = \frac{1}{|u_t|^2|v_t|^2}\left(\begin{array}{cc}
1 & a^{\top}b \\
a^{\top}b & 1
\end{array}\right)^2\left(\begin{array}{c}
\alpha_t \\
\beta_t
\end{array}
\right)
\]
\end{proof}
According to Theorem~\ref{theo:5}, by alternating optimization, we have
\[
z_{t} \propto M^{2t}z_0
\]
implying that $z_t$ will guarantee to converge to the largest eigenvector of $M$, which resolves the limitation of gradient descent. 

\section{Experiments}\label{sec:exp_app}

\subsection{Experimental Details}

\texttt{\color{sclgreyblue}General QA and Open-ended Generation Benchmarks}
The MMMU dataset~\citep{yue2023mmmu} comprises meticulously curated multimodal questions totaling 11.5K, sourced from college exams, quizzes, and textbooks. Spanning six core disciplines—Art and Design, Business, Science, Health and Medicine, Humanities and Social Science, and Tech and Engineering—it covers 30 subjects and 183 subfields. With a focus on advanced perception and reasoning embedded with domain-specific knowledge, MMMU challenges models to tackle tasks akin to those encountered by experts. It incorporates 30 highly diverse image types, including charts, diagrams, maps, tables, music sheets, and chemical structures, diverging from existing benchmarks by emphasizing sophisticated perception and reasoning. The MME dataset~\citep{fu2023mme} distinguishes itself by concurrently evaluating perception and cognition capabilities, encompassing tasks such as OCR, coarse-grained object recognition (including existence, count, position, and color), and fine-grained object recognition (encompassing movie posters, celebrities, scenes, landmarks, and artworks). With a total of 14 subtasks, MME provides a comprehensive evaluation, catering to the need for a thorough assessment of MLLMs across diverse modalities and cognitive domains. Additionally, traditional benchmarks such as MMBench-EN~\citep{liu2024mmbench}, MMVet~\citep{yu2024mm}, GQA~\citep{hudson2018gqa}, VQA~\citep{Goyal_2017_CVPR}, and Text-VQA~\citep{8953586} were utilized to gauge the model's visual understanding and reasoning abilities. Further evaluations included testing the model's proficiency in Chinese and its understanding of Chinese culture through the MMBench-CN test~\citep{liu2024mmbench}, along with assessing conversation abilities via LLaVA-Bench~\citep{liu2023visual}.

\texttt{\color{sclgreyblue}Math Reasoning and Science QA Benchmarks:} MathVista~\citep{lu2024mathvista} presents a multifaceted benchmark that integrates challenges from diverse mathematical and visual tasks. Comprising 6,141 examples, it amalgamates data from 28 existing multimodal datasets alongside the introduction of three novel datasets: IQTest, FunctionQA, and PaperQA. Solving these tasks demands intricate visual comprehension and compositional reasoning, posing significant challenges even for state-of-the-art foundation models. MathVision~\citep{wang2024measuring} meticulously curates a collection of 3,040 high-quality mathematical problems with visual contexts drawn from real math competitions. Encompassing 16 distinct mathematical disciplines and graded across 5 difficulty levels, this dataset offers a comprehensive and diverse range of challenges for assessing the mathematical reasoning abilities of LMMs. MathVerse provides an inclusive visual math benchmark by collecting 2,612 high-quality, multi-subject math problems with accompanying diagrams sourced from publicly available materials. Each problem undergoes transformation by human annotators, resulting in six distinct versions with varying levels of information content in multimodality, culminating in 15K test samples. This approach enables MathVerse to thoroughly evaluate the ability of LMMs to comprehend visual diagrams for mathematical reasoning. ScienceQA~\citep{lu2022learn} covers 26 topics, 127 categories, and 379 skills, offering comprehensive coverage across various domains.

\texttt{\color{sclgreyblue}{Hallucination}}   The existence and count subsets of the MME dataset~\citep{fu2023mme} are applicable for object-level hallucination, while the position and color subsets are suitable for attribute-level hallucination. POPE~\citep{li2023evaluating} introduces a streamlined method for evaluating object hallucination in LMMs. In this assessment framework, LMMs are tasked with determining the presence of a specific object in a given image. The benchmark ensures a balanced ratio between queries probing existent and non-existent objects (i.e., 50\% each). AMBER~\citep{wang2023llm} is an LLM-free Multi-dimensional Benchmark for LMMs hallucination evaluation, which can be used to evaluate fir the discriminative task including existence, attribute and relation hallucination.

We provide a thorough examination of the evaluation benchmarks employed, accompanied by their respective metrics as detailed in \cref{tab:benchmark_metrics}. Additionally, we outline the training hyperparameters for both the initial vision-language alignment pretraining and the subsequent visual instruction tuning stages in \cref{tab:hyper-parameter}.

\begin{table}[]
\caption{\textbf{Summary of the evaluation benchmarks}.}\label{tab:benchmark_metrics}
\resizebox{\textwidth}{!}{%
\begin{tabular}{@{}ccc@{}}
\toprule  \rowcolor{sclgreyblue!25}
\textbf{Data} & \textbf{Response formatting prompts} & \textbf{Metric} \\ \midrule
Text-VQA & Answer the question using a single word or phrase. & Accuracy ($\uparrow$) \\
GQA & Answer the question using a single word or phrase. & Accuracy ($\uparrow$) \\
VQA-v2 & Answer the question using a single word or phrase. & Accuracy ($\uparrow$) \\
MMBench, MMBench-CN & Answer with the option’s letter from the given choices directly. & Accuracy ($\uparrow$) \\
MMMU (Multi-choice) & Answer with the option’s letter from the given choices directly. & Accuracy ($\uparrow$) \\
MMMU (Short answer) & Answer the question using a single word or phrase. & Accuracy ($\uparrow$) \\
LLaVA-Bench & - & Score Ratio compared to GPT-4 ($\uparrow$) \\
MME & Answer the question using a single word or phrase. & Total Score ($\uparrow$)\\
Science-QA & Answer with the option’s letter from the given choices directly. & IMG-Accuracy ($\uparrow$) \\
MathVision & Answer the question using a single word or phrase. & Accuracy ($\uparrow$) \\
MathVista & - & Accuracy ($\uparrow$) \\
MathVerse & - & Accuracy ($\uparrow$) \\
POPE & Answer the question using a single word or phrase. & F1 score ($\uparrow$) \\
AMBER & Answer the question using a single word or phrase. & Accuracy ($\uparrow$) \\

\bottomrule
\end{tabular}%
}
\end{table}

\begin{table}[]
\centering
\caption{\textbf{Hyperparameters of \abbr}. The hyper-parameters of Vicuna-13B is similar to Vicuna-7B.}\label{tab:hyper-parameter}
\resizebox{0.8\textwidth}{!}{%
\begin{tabular}{@{}ccc|cc|cc@{}}
\toprule  \rowcolor{sclgreyblue!25}
 LLM & \multicolumn{2}{c}{Vicuna-7B} & \multicolumn{2}{c}{Llama3-8B} & \multicolumn{2}{c}{Llama3-70B (LoRA)}  \\ \midrule
 & Stage I \& II & Stage III & Stage I \& II & Stage III & Stage I \& II & Stage III \\
batch size & 256 & 128 & 256 & 128 & 256 & 128 \\
lr & 1.00E-03 & 2.00E-05 & 1.00E-03 & 1.00E-05 & 1.00E-03 & 1.00E-04 \\
Deepspeed ZeRO Stage & 2 & 3 & 2 & 3 & 3 & 3 offload \\
lr schedule & \multicolumn{6}{c}{Cosine Annealing with Linear Warmup} \\
lr warmup ratio & \multicolumn{6}{c}{0.03} \\
Weight Decay & \multicolumn{6}{c}{0} \\
Epoch & \multicolumn{6}{c}{1} \\
Optimizer & \multicolumn{6}{c}{AdamW} \\
Precision & \multicolumn{6}{c}{BF16} \\ \bottomrule
\end{tabular}%
}
\end{table}

\subsection{Extended Experiments}\label{sec:extend_exp}

\textbf{Instruction Following Concerns Regarding Fine-tuning with LoRA.} As illustrated in Fig.~\ref{fig:lora_example}, within the MathVerse benchmark, despite our prompt explicitly requesting the model to \textit{``Please directly answer the question and provide the correct option letter''}, \abbr-8B$^\dagger$ consistently generates the entire reasoning path. While this does not necessarily result in incorrect answers, it diverges from our intended approach. Intriguingly, when we modify the prompt to \textit{``Please directly answer the question \textbf{using a single word or phrase} and provide the correct option letter''}, the model responds with just "$C$," highlighting the sensitivity of \abbr-8B$^\dagger$. Additionally, we observed that the performance of \abbr-8B$^\dagger$ is inferior to that of \abbr-8B, as exemplified in Fig.~\ref{fig:lora2}. Thus, while LoRA fine-tuning maintains similar performance across most benchmarks, the constrained parameter updates and limited alterations to the parameters of LLMs may make it more challenging than fully fine-tuning to strictly adhere to instructions, particularly when image prompts are involved.
\begin{figure*}[t]
\subfigure[\textbf{MathVerse.}]{
\begin{minipage}[t]{0.4\linewidth}
\centering
 \includegraphics[width=\linewidth]{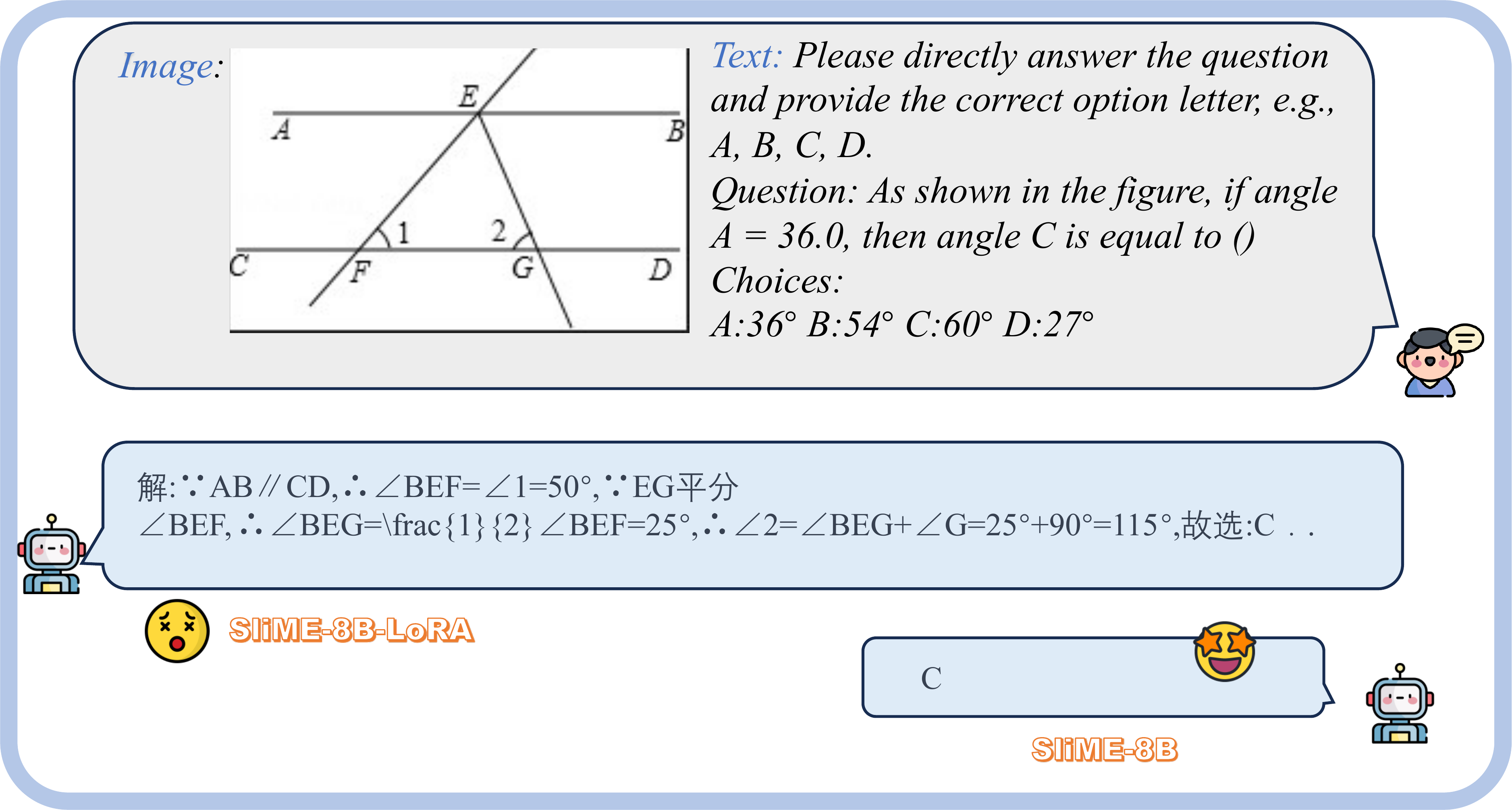}
    \label{fig:lora1}
\end{minipage}%
}%
\subfigure[\textbf{LLaVA-Bench.}]{
\begin{minipage}[t]{0.55\linewidth}
 \includegraphics[width=\linewidth]{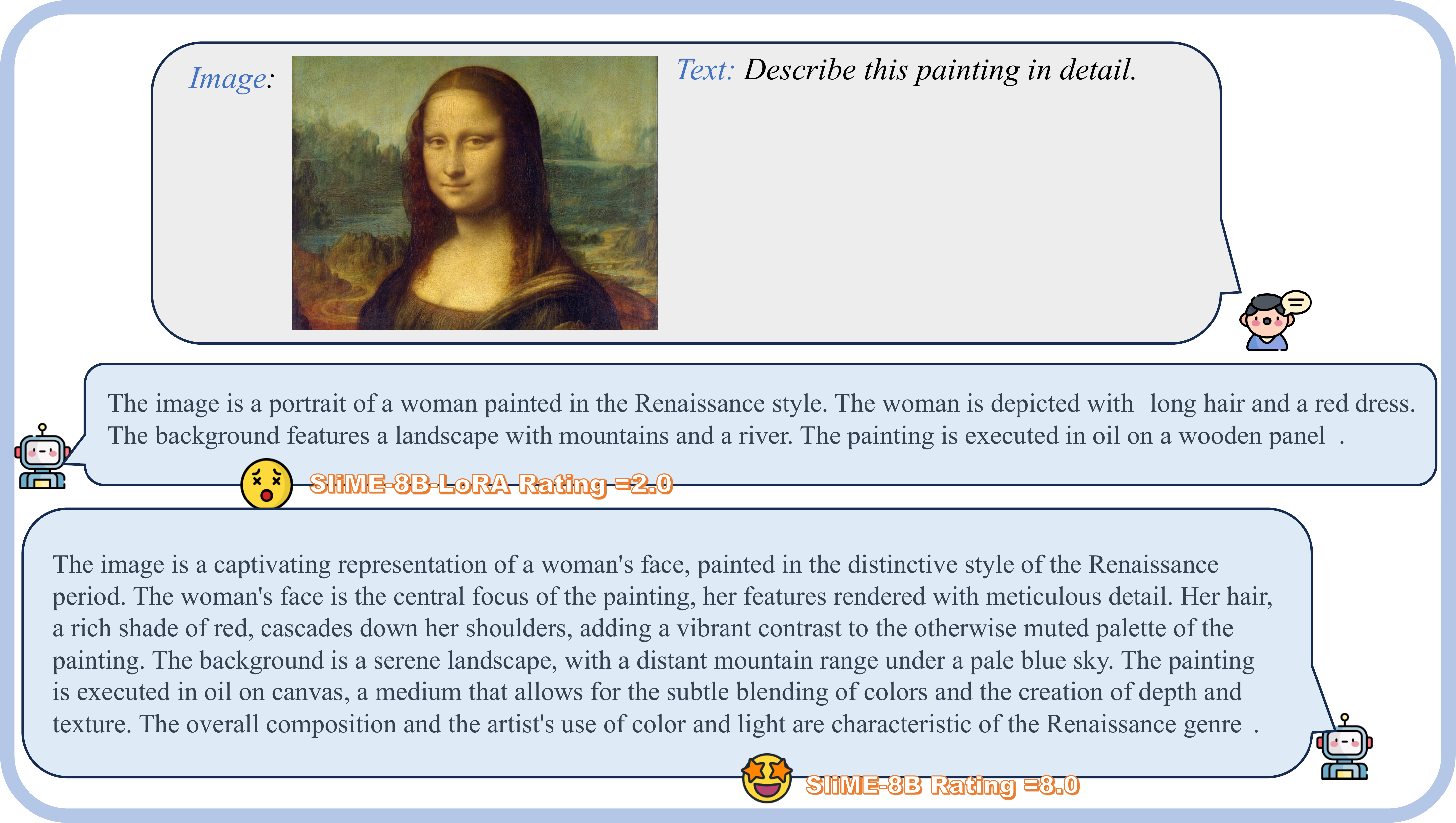}
    \label{fig:lora2}
\end{minipage}%
}%
\centering
\vspace{-0.2cm}
\caption{\textbf{Failure cases of \abbr-8B$^\dagger$.}}
\label{fig:lora_example}
\end{figure*}


\begin{figure}
    \centering
    \includegraphics[width=\linewidth]{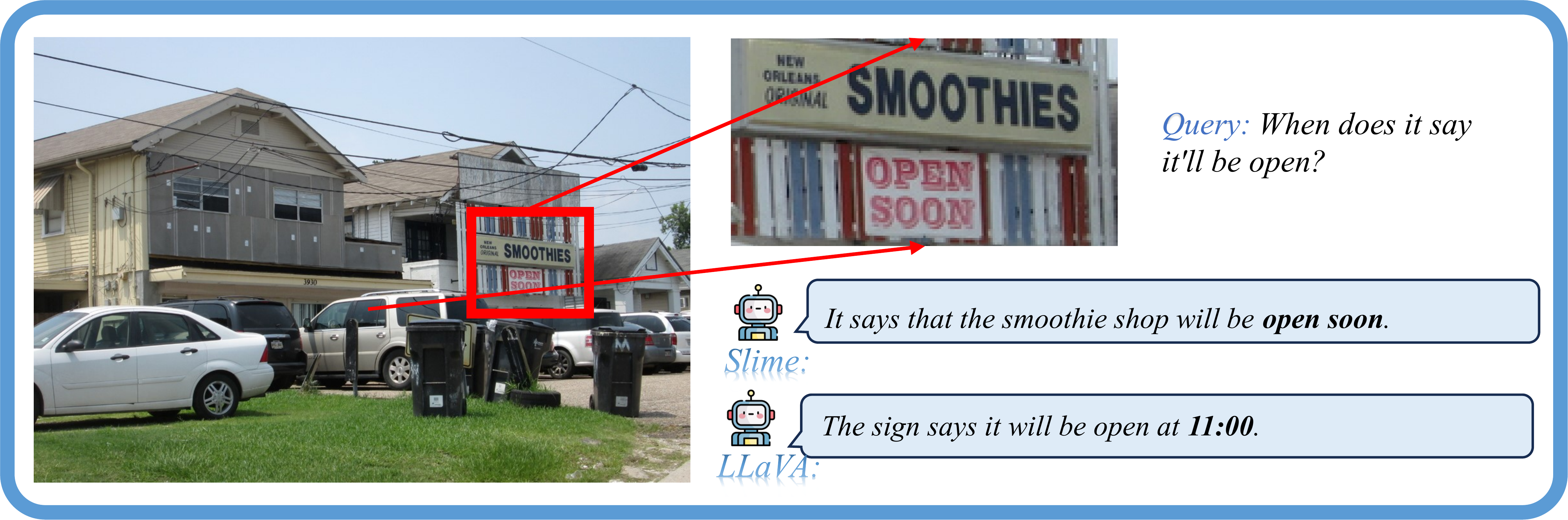}
    \caption{\textbf{High-resolution image perception.}}
    \label{fig:detail2}
\end{figure}

\begin{figure}
    \centering
\includegraphics[width=\linewidth]{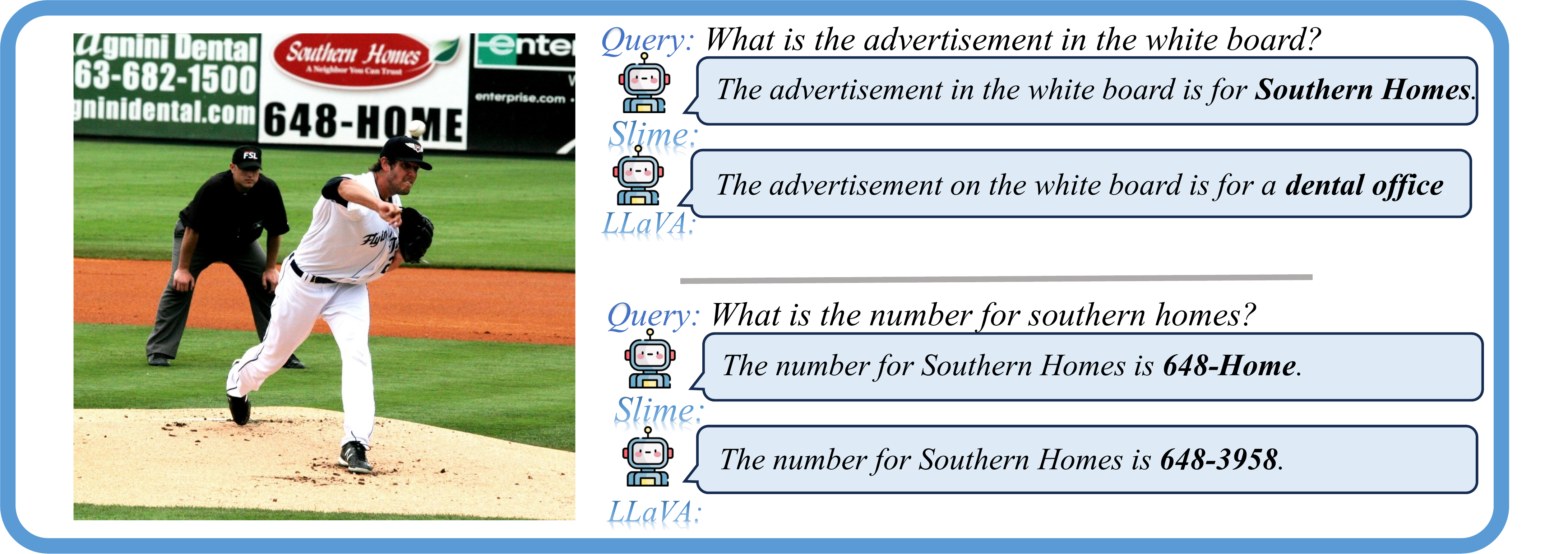}
    \caption{\textbf{High-resolution image perception.}}
    \label{fig:detail3}
\end{figure}

\begin{figure}
    \centering
    \includegraphics[width=\linewidth]{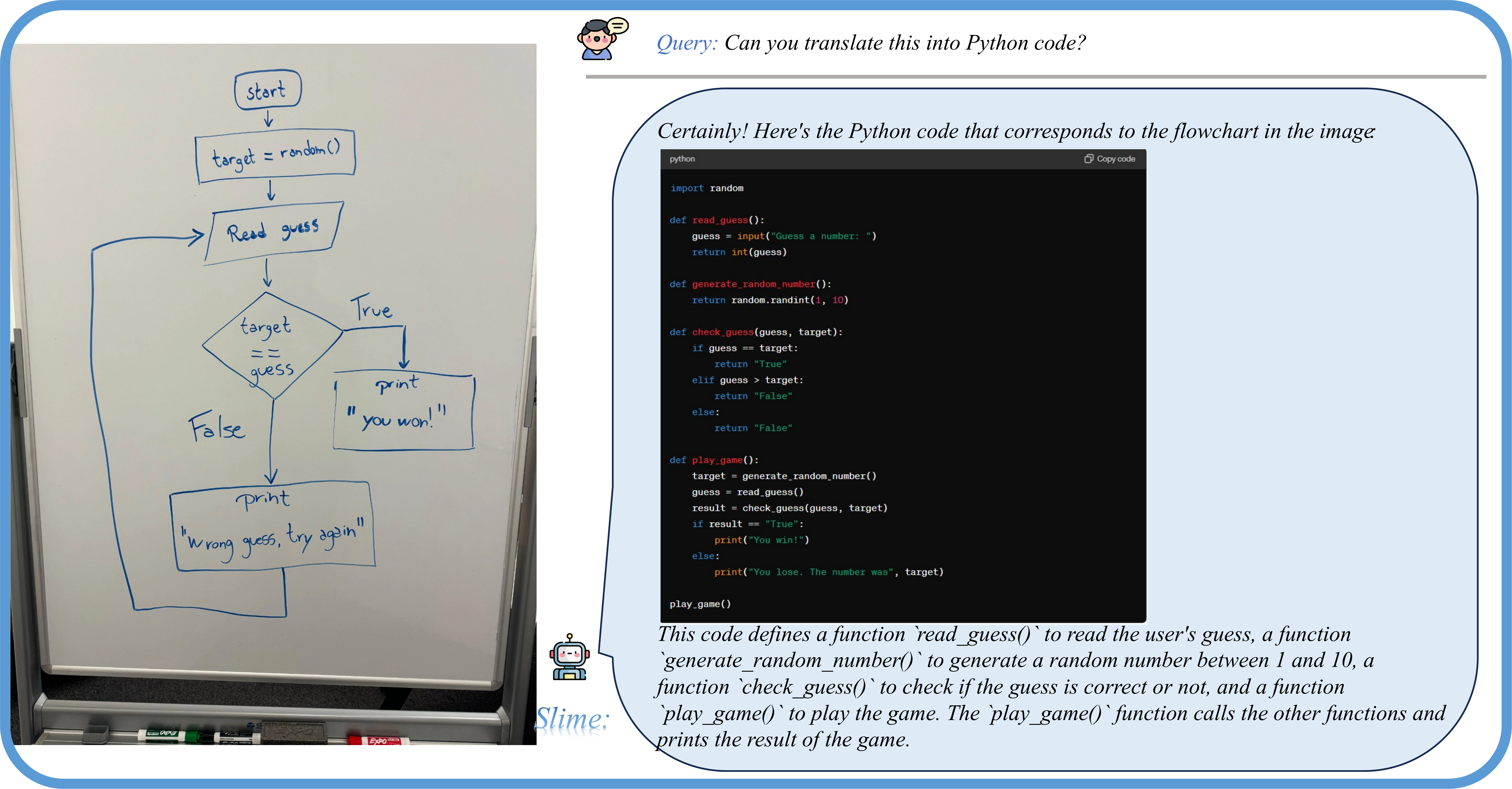}
    \caption{\textbf{Code generation.}}
    \label{fig:detail_code}
\end{figure}

\begin{figure}
    \centering
    \includegraphics[width=\linewidth]{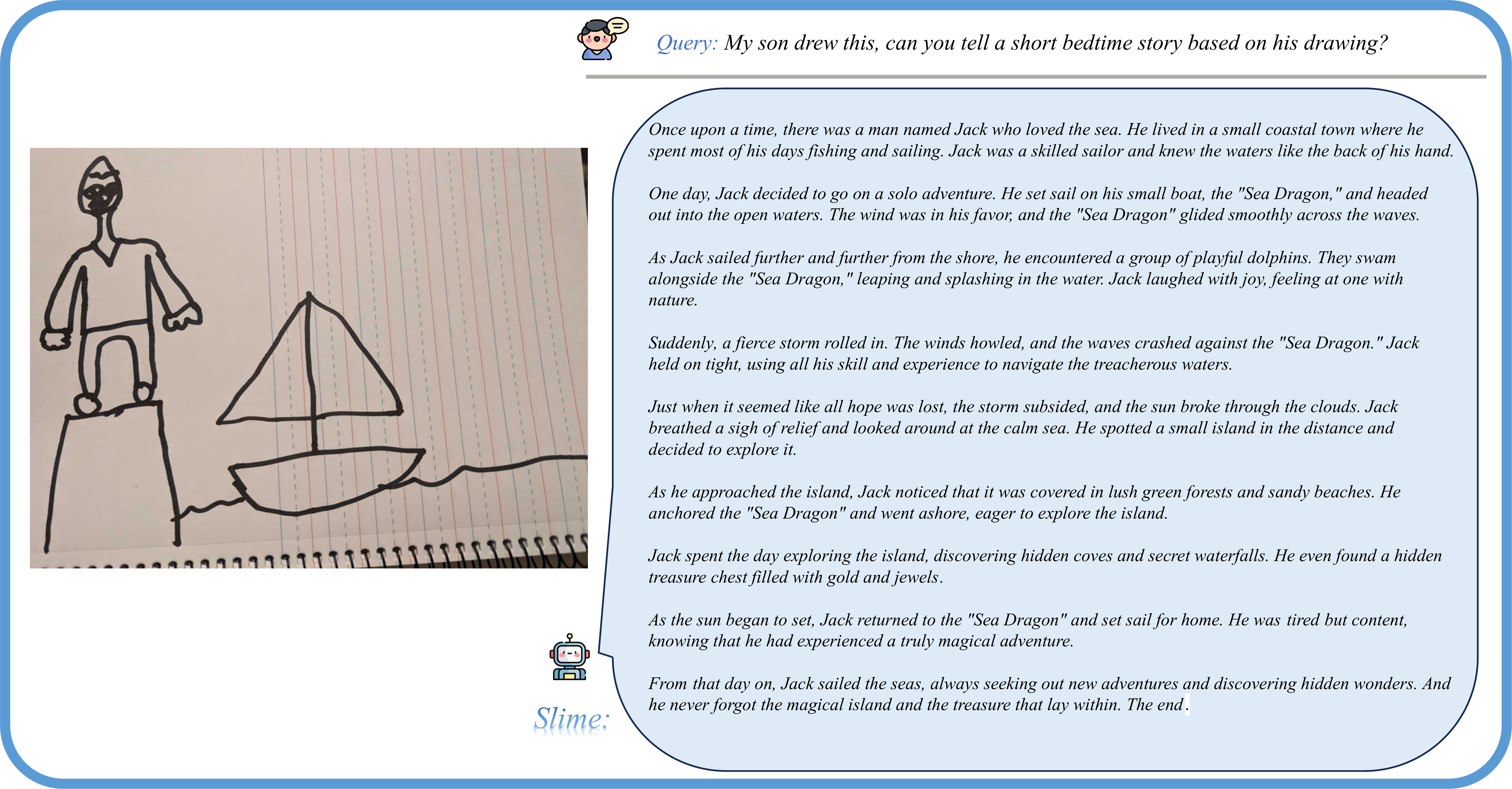}
    \caption{\textbf{Story writing.}}
    \label{fig:detail_story}
\end{figure}

\begin{figure}
    \centering
    \includegraphics[width=\linewidth]{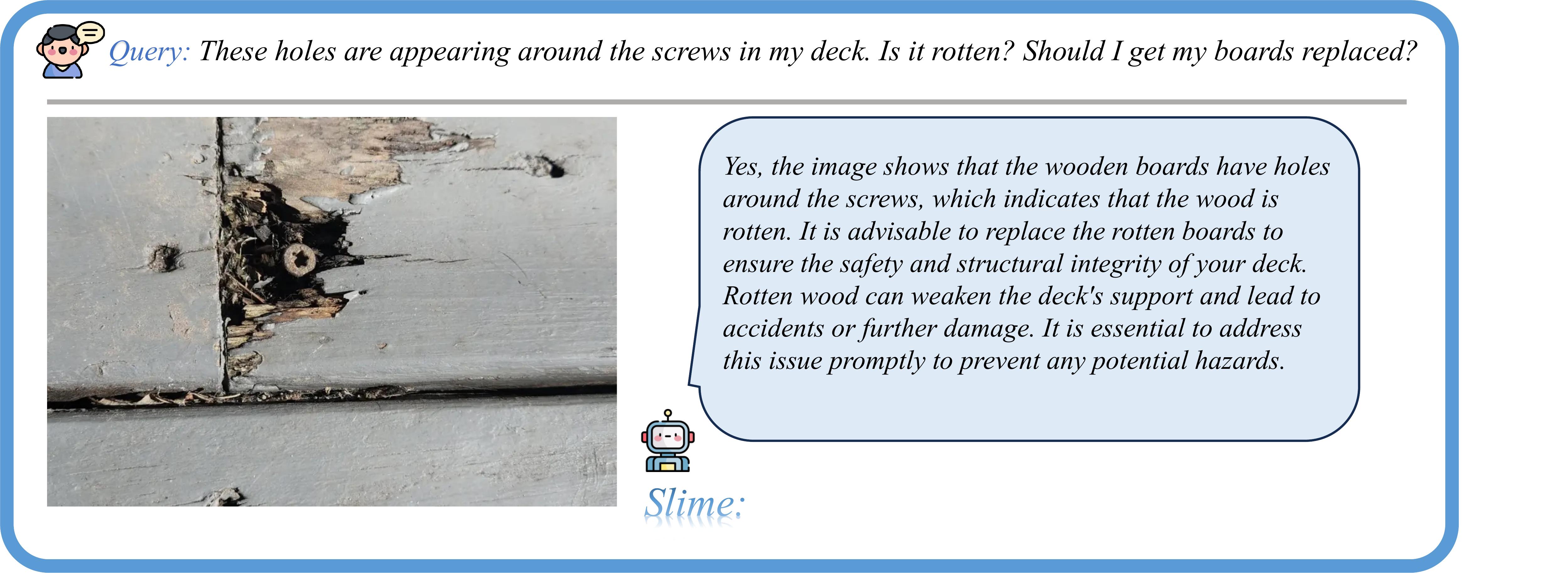}
    \caption{\textbf{Suggestion.}}
    \label{fig:detail_sugg}
\end{figure}

\end{document}